\newif\ifsubmit
\newcommand{\dnote}[1]{}
\newcommand{\snote}[1]{}
\newcommand{\dnote}[1]{\textcolor{blue}{Dilip: #1}}
\newcommand{\snote}[1]{\textcolor{orange}{Satinder: #1}}
\definecolor{dblue}{RGB}{98, 140, 190}
\definecolor{dlblue}{RGB}{216, 235, 255}
\definecolor{dgreen}{RGB}{124, 155, 127}
\definecolor{dpink}{RGB}{207, 166, 208}
\definecolor{dyellow}{RGB}{255, 248, 199}
\definecolor{dgray}{RGB}{46, 49, 49}
\newcommand{\durl}[1]{\textcolor{dblue}{\underline{\url{#1}}}}
\newcommand{\eps}{\varepsilon}
\newcommand{\mc}[1]{\mathcal{#1}}
\newcommand{\indic}{\mathbbm{1}}
\newcommand{\bE}{\mathbb{E}}
\newcommand{\bR}{\mathbb{R}}
\newcommand{\bP}{\mathbb{P}}
\newcommand{\bH}{\mathbb{H}}
\newcommand{\bN}{\mathbb{N}}
\newcommand{\bZ}{\mathbb{Z}}
\newcommand{\dangle}[1]{\langle#1\rangle}
\newcommand{\ra}{\rightarrow}
\DeclareMathOperator*{\argmax}{arg\,max}
\newmdenv[
  topline=false,
  bottomline=false,
  rightline = false,
  leftmargin=10pt,
  rightmargin=0pt,
  innertopmargin=0pt,
  innerbottommargin=0pt
]{innerproof}
\newcounter{DaveDefCounter}
\newtheorem{assumption}{Assumption}
\newtheorem{proposition}{Proposition}
\title{Planning to the Information Horizon of BAMDPs via Epistemic State Abstraction}
\author{%
  Dilip Arumugam\\
  Department of Computer Science\\
  Stanford University\\
  \texttt{dilip@cs.stanford.edu}\\
  \And
  Satinder Singh \\
  DeepMind, London\\
  \texttt{baveja@deepmind.com} \\
}
\begin{document}

\maketitle

\begin{abstract}
  The Bayes-Adaptive Markov Decision Process (BAMDP) formalism pursues the Bayes-optimal solution to the exploration-exploitation trade-off in reinforcement learning. As the computation of exact solutions to Bayesian reinforcement-learning problems is intractable, much of the literature has focused on developing suitable approximation algorithms. In this work, before diving into algorithm design, we first define, under mild structural assumptions, a complexity measure for BAMDP planning. As efficient exploration in BAMDPs hinges upon the judicious acquisition of information, our complexity measure highlights the worst-case difficulty of gathering information and exhausting epistemic uncertainty. To illustrate its significance, we establish a computationally-intractable, exact planning algorithm that takes advantage of this measure to show more efficient planning. We then conclude by introducing a specific form of state abstraction with the potential to reduce BAMDP complexity and gives rise to a computationally-tractable, approximate planning algorithm.
\end{abstract}

\section{Introduction}

The Bayes-Adaptive Markov Decision Process (BAMDP)~\citep{duff2002optimal} is a classic formalism encapsulating the optimal treatment of the exploration-exploitation trade-off by a reinforcement-learning agent with respect to prior beliefs over an uncertain environment. Unfortunately, the standard formulation suffers from an intractably-large hyperstate space (that is, the joint collection of environment states coupled with the agent's current state of knowledge over the unknown environment) and much of the literature has been dedicated to identifying suitable approximations~\citep{bellman1959adaptive,dayan1996exploration,duff1997local,dearden1998bayesian,strens2000bayesian,duff2001monte,duff2003diffusion,duff2003design,wang2005bayesian,poupart2006analytic,castro2007using,kolter2009near,asmuth2009bayesian,dimitrakakis2009complexity,sorg2010variance,araya2012near,guez2012efficient,guez2013scalable,guez2014bayes,ghavamzadeh2015bayesian,zintgraf2019varibad}. In this work, we take steps toward clarifying the hardness of BAMDPs before outlining an algorithmic concept that may help mitigate problem difficulty and facilitate near-optimal solutions. 

First, we introduce the notion of \textit{information horizon} as a complexity measure on BAMDP planning, characterizing when it is truly difficult to identify the underlying uncertain environment. Naturally, the agent's state of knowledge at each timestep (a component of the overall BAMDP hyperstate) reflects its current epistemic uncertainty and, as the agent accumulates data, this posterior concentrates, exhausting uncertainty and identifying the true environment; after this point, the Bayes-optimal policy naturally coincides with the optimal policy of the underlying Markov Decision Process (MDP). Simply put, the information horizon quantifies the worst-case number of timesteps needed for the agent to reach this point whereupon there is no more information to be gathered about the uncertain environment.  

With this complexity measure in hand, we then entertain the idea of \textit{epistemic state abstraction} as an effective algorithmic tool for trading off between reduced information horizon (complexity) and near-Bayes-optimality of the corresponding planning solution. Intuitively, as the total number of knowledge states an agent may take on drives the intractable size of the hyperstate space, we operationalize state abstraction~\citep{li2006towards,abel2016near} to perform a lossy compression of the epistemic state space, inducing a ``smaller'' and more tractable BAMDP for planning; our results not only mirror those of analogous work on state aggregation for improved efficiency in traditional MDP planning~\citep{van2006performance} but also parallel similar findings~\citep{hsu2007makes,zhang2012covering} on the effectiveness of belief state aggregation in partially-observable MDP (POMDP) planning~\citep{kaelbling1998planning}. 

On the whole, our work provides one possible answer to a question that has already been asked and answered several times in the context of MDPs~\citep{bartlett2009regal,jaksch2010near,farahmand2011action,maillard2014hard,bellemare2016increasing,arumugam2021information,abel2021bpd}: how hard is my BAMDP? While the remainder of the paper goes on to examine how one particular mechanism for reducing this complexity can translate into a more efficient planning algorithm, we anticipate that this work can serve as a starting point for building a broader taxonomy of BAMDPs, paralleling existing structural classes of MDPs~\citep{jiang2017contextual,sun2019model,agarwal2020flambe,jin2021bellman}. 

\section{Problem Formulation}

In this section, we formally define BAMDPs as studied in this paper. As a point of contrast, we begin by presenting the standard MDP formalism used throughout the reinforcement-learning literature~\citep{sutton1998introduction}. We use $\Delta(\mc{X})$ to denote the set of all probability distributions with support on an arbitrary set $\mc{X}$ and denote, for any natural number $N \in \bN$, the index set as $[N] = \{1,2,\ldots,N\}$. For any two arbitrary sets $\mc{X}$ and $\mc{Y}$, we denote the class of all functions mapping from $\mc{X}$ to $\mc{Y}$ as $\{\mc{X} \ra \mc{Y}\} \triangleq \{f \mid f:\mc{X} \ra \mc{Y}\}$.

\subsection{Markov Decision Processes}

We begin with a sequential decision-making problem represented via the traditional finite-horizon Markov Decision Process (MDP)~\citep{bellman1957markovian,Puterman94} $\langle \mc{S}, \mc{A}, \mc{R}, \mc{T}, \beta, H \rangle$ where $\mc{S}$ is a finite set of states, $\mc{A}$ is a finite set of actions, $\mc{R}: \mc{S} \times \mc{A} \ra [0,1]$ is a deterministic reward function, $\mc{T}: \mc{S} \times \mc{A} \ra \Delta(\mc{S})$ is a transition function prescribing next-state transition distributions for all state-action pairs, $\beta \in \Delta(\mc{S})$ is an initial state distribution, and $H \in \bN$ is the horizon denoting the agent's total number of steps or interactions with the environment. An agent's sequential interaction within this environment proceeds in each timestep $h \in [H]$, starting with an initial state $s_1 \sim \beta(\cdot)$, by observing the current state $s_h \in \mc{S}$, selecting an action $a_h \in \mc{A}$, and then enjoying a reward $\mc{R}(s_h, a_h)$ as the environment transitions to $s_{h+1} \sim \mc{T}(\cdot \mid s_h, a_h)$. Action selections made by the agent are governed by its non-stationary policy $\pi$: a collection of $H$ stationary, deterministic policies $\pi = (\pi_1,\pi_2,\ldots,\pi_H)$, where $\forall h \in [H]$, $\pi_h: \mc{S} \ra \mc{A}$. We quantify the performance of policy $\pi$ at timestep $h \in [H]$ by its induced value function $V^\pi_h: \mc{S} \ra \bR$ denoting the expected sum of future rewards by deploying policy $\pi$ from a particular state $s \in \mc{S}$: $V_h^\pi(s) = \bE\left[\sum\limits_{h'=h}^H \mc{R}(s_{h'},a_{h'}) \mid s_h = s\right],$ where the expectation integrates over randomness in the environment transitions. Analogously, we define the action-value function induced by policy $\pi$ at timestep $h$ as $Q^\pi_h: \mc{S} \times \mc{A} \ra \bR$ which denotes the expected future sum of rewards by being in a particular state $s \in \mc{S}$, executing a particular action $a \in \mc{A}$, and then following policy $\pi$ thereafter: $Q^\pi_h(s,a) = \bE\left[\sum\limits_{h'=h}^H \mc{R}(s_{h'},a_{h'}) \mid s_h = s, a_h = a\right].$ We are guaranteed the existence of an optimal policy $\pi^\star$ that achieves supremal value $V^\star_h(s) = \sup\limits_{\pi \in \Pi^H} V^\pi_h(s)$ for all $s \in \mc{S}, h \in [H]$ where the policy class contains all deterministic policies $\Pi = \{\pi \mid \pi: \mc{S} \ra \mc{A}\}$. Since rewards are bounded in $[0,1]$, we have that $0 \leq V^\pi_h(s) \leq V^\star_h(s) \leq H - h + 1$ for all $s \in \mc{S}, h \in [H]$, and $\pi$. These value functions obey the Bellman equation and the Bellman optimality equation, respectively:
$$V^\pi_h(s) = Q^\pi_h(s,\pi_h(s)) \quad V^\star_h(s) = \max\limits_{a \in \mc{A}} Q^\star_h(s,a), \qquad V^\pi_{H+1}(s) = 0 \quad V^\star_{H+1}(s) = 0 \quad \forall s \in \mc{S},$$
$$Q^\pi_h(s,a) = \mc{R}(s,a) + \bE_{s' \sim \mc{T}(\cdot \mid s,a)}\left[V^\pi_{h+1}(s')\right], \qquad Q^\star_h(s,a) = \mc{R}(s,a) + \bE_{s' \sim \mc{T}(\cdot \mid s,a)}\left[V^\star_{h+1}(s')\right].$$

\subsection{Bayes-Adaptive Markov Decision Processes}

The BAMDP formalism offers a Bayesian treatment of an agent interacting with an uncertain MDP. More specifically, a decision-making agent is faced with a MDP $\mc{M} = \langle \mc{S}, \mc{A}, \mc{R}, \mc{T}_\theta, \beta, H \rangle$ defined around an unknown transition function $\mc{T}_\theta$\footnote{For ease of exposition, we focus on uncertainty in transition dynamics although one could model uncertainty in either the reward function or the full MDP model (rewards \& transitions).}, for some latent parameter $\theta \in \Theta$. Prior uncertainty in $\theta$ is reflected by the distribution $p(\theta)$. In classic work on BAMDPs with finite state-action spaces, the parameters $\theta$ denote visitation counts and $p(\theta)$ is a Dirichlet distribution, so as to leverage the convenience of Dirichlet-multinomial conjugacy for exact posterior updates ~\citep{duff2002optimal,poupart2006analytic}. For our purposes, we will assume an alternative parameterization whose importance will be made clear later when defining our complexity measure.

\begin{assumption}
We assume that $\Theta$ is known and $|\Theta| < \infty$ such that an agent is only ever reasoning about its uncertainty over a finite set of $|\Theta|$ known MDPs. We further make a realizability assumption that the true parameters reside in this finite set, $\theta \in \Theta$.
\label{assum:known_mdps}
\end{assumption}

Under Assumption \ref{assum:known_mdps}, an agent's prior uncertainty in $\mc{T}_\theta$ is reflected by the distribution $p(\theta) \in \Delta(\Theta)$ which, with each step of experience encountered by the agent, may be updated via Bayes' rule to recover a corresponding posterior distribution in light of observed data from the environment. For simplicity, we do not concern ourselves with the computation of the posterior and instead assume access to a deterministic $\mc{B}: \Delta(\Theta) \times \mc{S} \times \mc{A} \times \mc{S} \ra \Delta(\Theta)$ that performs an exact posterior update to any input distribution $p \in \Delta(\Theta)$ based on the experience tuple $(s,a,s') \in \mc{S} \times \mc{A} \times \mc{S}$ in $\mc{O}(1)$ time. 

The corresponding BAMDP for $\mc{M}$ is defined around a so-called \textit{hyperstate} space $\mc{X} = \mc{S} \times \Delta(\Theta)$ such that any hyperstate $x = \langle s,p \rangle \in \mc{X}$ denotes the agent's original or physical state $s \in \mc{S}$ within the true MDP while $p \in \Delta(\Theta)$ denotes the agent's information state or epistemic state~\citep{lu2021reinforcement} about the uncertain environment; intuitively, the epistemic state represents the agent's knowledge of the environment based on all previously observed data. This gives rise to the BAMDP $\langle \mc{X}, \mc{A}, \overline{\mc{R}}, \overline{\mc{T}}, \overline{\beta}, H \rangle$ where $\mc{A}$ is the same action set as the original MDP $\mc{M}$, $\overline{\mc{R}}: \mc{X} \times \mc{A} \ra [0,1]$ is the same reward function as in $\mc{M}$ (that is, $\overline{\mc{R}}(\langle s, p \rangle, a) = \mc{R}(s,a)$ $\forall \langle s, p \rangle \in \mc{X}, a \in \mc{A}$), $\overline{\beta} \in \Delta(\mc{X})$ is defined as $\overline{\beta} = \beta \times \delta_{p(\theta)}$ where $\delta_{p(\theta)}$ denotes a Dirac delta centered around the agent's prior $p(\theta)$, and $H$ is the same horizon as MDP $\mc{M}$. Due to the determinism of the posterior updates given by $\mc{B}$, the BAMDP transition function $\overline{\mc{T}}: \mc{X} \times \mc{A} \ra \Delta(\mc{X})$ is defined as $$\overline{\mc{T}}(x' \mid x, a) = \sum\limits_{\theta \in \Theta} \mc{T}_\theta(s' \mid s, a) p(\theta) \indic\left(p' = \mc{B}(p,s,a,s')\right),$$ where $x' = \langle s', p' \rangle \in \mc{X}$. The associated BAMDP policy $\pi = (\pi_1,\pi_2,\ldots, \pi_H)$, $\pi_h: \mc{X} \ra \mc{A}, \forall h \in [H]$ selects actions based on the current state of the MDP as well as the agents accumulated knowledge of the environment thus far. With these components, we may define the associated BAMDP value functions with $x' = \langle s', \mc{B}(p, s, a, s')\rangle$:
$$V^\pi_h(x) = Q^\pi_h(x,\pi_h(x)) \quad V^\star_h(x) = \max\limits_{a \in \mc{A}} Q^\star_h(x,a), \qquad V^\pi_{H+1}(x) = 0 \quad V^\star_{H+1}(x) = 0 \quad \forall x \in \mc{X},$$
$$Q^\pi_h(x,a) = \mc{R}(s,a) + \sum\limits_{s', \theta} \mc{T}_\theta(s' \mid s, a)p(\theta) V^\pi_{h+1}(x'), \qquad Q^\star_h(x,a) = \mc{R}(s,a) + \sum\limits_{s', \theta} \mc{T}_\theta(s' \mid s, a)p(\theta) V^\star_{h+1}(x').$$

Based on these optimality equations, we see that the optimal policy of a BAMDP achieving supremal value $V^\star_h$ across all timesteps $h \in [H]$ is the Bayes-optimal policy which appropriately balances the exploration-exploitation trade-off in reinforcement learning. An observation is that this Bayes-optimal policy will tend to achieve lower value than the optimal policy of MDP $\mc{M}$ as the agent takes more informative (possibly sub-optimal) actions to identify the true underlying environment.

\section{Related Work}

\citet{bellman1959adaptive} offer the earliest formulation of Bayesian reinforcement learning, whereby the individual actions of a decision-making agent not only provide an update to the physical state of the world but also impact the agent's internal model of how the world operates. \citet{dayan1996exploration} follow this line of thinking to derive implicit exploration bonuses based on how an agent performs posterior updates. \citet{kolter2009near} make this more explicit and incorporate a specific visitation-based bonus that decays with the concentration of the agent's Dirichlet posterior. As an alternative, \citet{sorg2010variance} incorporate an exploration bonus based on the variance of the agent's posterior while \citet{araya2012near} achieve optimistic exploration by boosting transition probabilities. \citet{duff1997local} identify multi-armed bandits (that is, MDPs with exactly one state and arbitrarily many actions) as a unique setting where the Bayes-optimal solution is computationally tractable through the use of Gittins indices~\citep{gittins1979bandit}. While the vast space of more complicated BAMDPs are computationally intractable, a goal of this paper is to add a bit of nuance and clarify when one might still hope to recover efficient, approximate planning. This is also distinct from the PAC-BAMDP framework introduced by \citet{kolter2009near}, which serves as a characterization of algorithmic efficiency, rather than problem hardness.

Representing uncertainty in the optimal value function rather than environment transition function, \citet{dearden1998bayesian} derive a practical Bayesian $Q$-learning algorithm by foregoing representation of the epistemic state and instead resampling $Q^\star$-values at each timestep. \citet{strens2000bayesian} finds an alternate, tractable solution by lazily updating the epistemic state at the frequency of whole episodes, rather than individual timesteps; a long line of work~\citep{agrawal2017optimistic,osband2016deep,osband2016generalization,osband2017posterior,o2018uncertainty,osband2019deep} analyzes this type of approximation to the Bayesian reinforcement-learning problem theoretically and also explores how to scale these solution concepts with deep neural networks.  

\citet{duff2001monte} finds tractability in representing policies as finite-state stochastic automata, noting structural similarities between BAMDPs and partially-observable MDPs (POMDPs)~\citep{kaelbling1998planning}; this type of thinking is further extended by \citet{poupart2006analytic} who exploit similar structure between the optimal value functions of BAMDPs and POMDPs. \citet{duff2003design} examine improved memory requirements when applying actor-critic algorithms~\citep{konda2000actor} to BAMDPs while \citet{duff2003diffusion} consider how to approximately model the stochastic process of the evolving epistemic state via diffusion models. \citet{wang2005bayesian} introduce a sparse-sampling approach~\citep{kearns2002sparse} for balancing computational efficiency against fidelity to Bayes-optimal action selection. An analogous sparse-sampling approach is also developed by \citet{castro2007using}, but with a linear-programming methodology for value-function approximation. A line of work~\citep{guez2012efficient,guez2013scalable,guez2014bayes} develops more scalable, sparse-sampling lookahead approaches on the back of Monte-Carlo tree search~\citep{kocsis2006bandit}; these algorithms are somewhat similar in spirit to the approach of \citet{asmuth2009bayesian} who merge multiple posterior samples into a single model while \citet{guez2014bayes} keep each sample distinct and integrate out the posterior randomness. For a more complete and detailed survey of Bayesian reinforcement learning, we refer readers to \citet{ghavamzadeh2015bayesian}. Crucially, the aforementioned approaches largely revolve around ignoring the epistemic state, lazily updating the epistemic state, or approximating the impact of the epistemic state via random sampling. In contrast, this work offers a new approach and highlights how lossy compression of the epistemic state may naturally reduce BAMDP hardness. Perhaps the most related prior work is by \citet{lee2018bayesian} who introduce a practical approximate-planning approach by quantizing the epistemic state space; this paper clarifies the theoretical ramifications of this quantization step.

Our work is also connected to analyses of approximate value iteration~\citep{bellman1957markovian} in the MDP setting~\citep{tseng1990solving,littman1995complexity}, where more recent work has managed to recover improved sample complexity bounds for approximate value iteration~\citep{sidford2018variance,sidford2018near}. Like \citet{kearns1999finite}, our algorithms utilize exact value iteration almost as a black box and it is an open question for future work to see if similar ideas and proof techniques for these approximate variants might be leveraged in the BAMDP setting. Crucially, the variants of value iteration introduced in this work are merely a backdrop for illustrating the utility of our complexity measure and, more generally, a regard for underlying information structure in BAMDPs.

The particular class of epistemic state abstraction introduced and studied in this work revolves around the covering number of the epistemic state space. Curiously, this deepens an existing connection between BAMDPs and POMDPs~\citep{duff2001monte}, where a line of work establishes the covering number of the belief state space as a viable complexity measure for the latter both in theory~\citep{hsu2007makes} and in practice~\citep{zhang2012covering}. In a less related but similar vein, \citet{kakade2003exploration} establish a provably-efficient reinforcement-learning algorithm when the MDP state space is a metric space; their corresponding sample complexity guarantee depends on the covering number of the state space under the associated metric. Our planning complexity result for abstract BAMDPs mirrors those established by these works in its dependence on the covering number of the epistemic state space.

\section{The Complexity of BAMDP Planning}

In this section, we examine the difficulty of solving BAMDPs through the lens of a classic planning algorithm: value iteration~\citep{bellman1957markovian}. Due to space constraints, we relegate pseudocode for all discussed algorithms to Appendix \ref{sec:algos}. We begin with an quick review of the traditional algorithm applied to our setting before introducing the information horizon as a complexity measure for BAMDPs. This quantity gives rise to a more efficient planning algorithm for BAMDPs that waives excessive dependence on the original problem horizon. In order to facilitate an analysis of planning complexity in BAMDPs via value iteration, we require a finite hyperstate space $\mc{X}$. For now, we will assume that $\mc{X}$ is finite, but still considerably large, by virtue of an aggressively-fine quantization of the $(|\Theta|-1)$-dimensional simplex, also considered in the empirical work of \citet{lee2018bayesian}:

\begin{assumption}
We assume the existence of a suitable, fixed quantization of simplex $\widehat{\Delta}(\Theta) \subset \Delta(\Theta)$ where $|\widehat{\Delta}(\Theta)| < \infty$ such that the BAMDP hyperstate space $\mc{X} = \mc{S} \times \widehat{\Delta}(\Theta)$ is finite, $|\mc{X}| < \infty$.
\label{assum:finite_hyper}
\end{assumption}

\subsection{Naive Value Iteration}

To help build intuitions, we begin by presenting a typical version of value iteration for finite-horizon BAMDPs as Algorithm \ref{alg:bamdp_vi}. This algorithm iterates backwards through the $H$ timesteps, computing $Q^\star_h$ across every hyperstate-action pair. With the provision of our posterior update oracle $\mc{B}$, we avoid a square dependence on the hyperstate space $(|\mc{X}|^2)$ and instead only require $\mc{O}(|\mc{S}||\Theta|)$ to compute next-state value. Consequently, the resulting planning complexity of Algorithm \ref{alg:bamdp_vi} is $\mc{O}(|\mc{X}||\mc{A}||\mc{S}||\Theta|H)$. Clearly, this represents an onerous burden for two distinct reasons: (1) we are forced to contend with a potentially very large horizon $H$ and (2) we must also search through the entirety of the hyperstate space, $\mc{X}$. In the sections that follow, we alleviate the burdens of challenges (1) and (2) in series, using our new notion of information horizon to mitigate the impact of $H$ and leveraging epistemic state abstraction to further reduce the role of $|\mc{X}|$, where the latter occurs at the cost of introducing approximation error.

\subsection{Information Horizon}

As noted in the previous section, our planning complexity suffers from its dependence on the BAMDP horizon $H$. A key observation, however, is that once an agent has completely resolved its uncertainty and identified one of the $|\Theta|$ environments, all that remains is to deploy the optimal policy for that particular MDP. As an exaggerated but illustrative example of this, consider a BAMDP where any action executed at the first timestep completely identifies the true environment $\theta \in \Theta$. With no residual epistemic uncertainty left, the Bayes-optimal policy would now completely coincide with the optimal policy and take actions without changing the epistemic state since, at this point, the agent has acquired all the requisite information about the previously unknown environment. Even if the problem horizon $H$ is substantially large, a simple BAMDP like the one described should be fairly easy to solve as epistemic uncertainty is so easily diminished and information is quickly exhausted; it is this principle that underlies our hardness measure.

Let $\pi$ be an arbitrary non-stationary policy. For any hyperstate $x \in \mc{X}$, we denote by $\bP^\pi(x_h = x)$ the probability that policy $\pi$ visits hyperstate $x$ at timestep $h$. With this, we may define the reachable hyperstate space of policy $\pi$ at timestep $h \in [H]$ as $\mc{X}^\pi_h = \{x \in \mc{X} \mid \bP^\pi(x_h = x) > 0\} \subset \mc{X}.$ In words, the reachable hyperstate space of a policy $\pi$ at a particular timestep is simply the set of all possible hyperstates that may be reached by $\pi$ at that timestep with non-zero probability. Recall that for any hyperstate $x = \dangle{s, p} \in \mc{X}$, the epistemic state $p \in \Delta(\Theta)$ is a (discrete) probability distribution, for which we may denote its corresponding entropy as $\bH(p)$. Given a BAMDP, we define the \textit{information horizon of a policy} $\pi$ as $\mc{I}(\pi) = \inf\{h \in [H] \mid \forall x_h = \dangle{s_h, p_h} \in \mc{X}^\pi_h, \bH(p_h) = 0\}.$ The information horizon of a policy, if it exists, identifies the first timestep in $[H]$ where, regardless of precisely which hyperstate is reached by following $\pi$ at this timestep, the agent has fully resolved all of its epistemic uncertainty over the environment $\theta$. At this point, we call attention back to our structural Assumption \ref{assum:known_mdps} for BAMDPs and note that, under the standard parameterization of epistemic state via count parameters for Dirichlet priors/posteriors, we would only be able to assess residual epistemic uncertainty through differential entropy which, unlike the traditional (Shannon) entropy $\bH(\cdot)$, is potentially negative and has no constant lower bound~\citep{cover2012elements}.\footnote{Prior work (see, for example, Theorem 1 of \citet{kolter2009near}) operating with the Dirichlet parameterization will make an alternative assumption for similar effect where epistemic state updates cease after a certain number of state-action pair visitations.} Naturally, to compute the \textit{information horizon of the BAMDP}, we need only take the supremum across the non-stationary policy class: $\mc{I} = \sup\limits_{\pi \in \Pi^H} \mc{I}(\pi),$ where $\Pi = \{\mc{X} \ra \mc{A}\}$. 

Clearly, when it exists, we have that $1 \leq \mc{I} \leq H$; the case where $\mc{I} = 1$ corresponds to having a prior $p(\theta)$ that is itself a Dirac delta $\delta_{\theta}$ centered around the true environment, in which case, $\theta$ is known completely and the agent may simply compute and deploy the optimal policy for the MDP $\dangle{\mc{S}, \mc{A}, \mc{R}, \mc{T}_\theta, \beta, H}$. At the other end of the spectrum, an information horizon $\mc{I} = H$ suggests that, in the worst case, an agent may need all $H$ steps of behavior in order to fully identify the environment. In the event that there exists any single non-stationary policy $\pi$ for which the infimum of $\mc{I}(\pi)$ does not exist (that is, $\mc{I}(\pi) = \infty$), then clearly $\mc{I} = \infty$; this represents the most difficult, worst-case scenario wherein an agent may not always capable of fully resolving its epistemic uncertainty within the specified problem horizon $H$. For certain scenarios, the supremum taken over the entire non-stationary policy class may be exceedingly strict and, certainly, creates a computational intractability should one wish to operationalize the information horizon algorithmically; in these situations, it may be more natural to consider smaller or regularized policy classes (for instance, the collection of expressible policies under a chosen neural network architecture) that yield more actionable notions of BAMDP complexity. We now go on to show how the information horizon can be used to design a more efficient BAMDP planning algorithm whose planning complexity bears a more favorable dependence on $H$.

\subsection{Informed Value Iteration}

The key insight from the previous section is that, when the information horizon exists and once an agent has acted for $\mc{I}$ timesteps, the Bayes-optimal policy necessarily falls back to the optimal policy associated with the true environment. Consequently, if the solutions to all $|\Theta|$ possible underlying MDPs are computed up front, an agent can simply backup their optimal values starting from the $\mc{I}$th timestep, rather than backing up values beginning at the original horizon $H$. This high-level idea is implemented as Algorithm \ref{alg:bamdp_ivi} which assumes access to a sub-routine \texttt{mdp\_value\_iteration} that consumes a MDP and produces the associated optimal value function for the initial timestep, $V^\star_1$.

Since the underlying unknown MDP is one of $|\Theta|$ possible MDPs, Algorithm \ref{alg:bamdp_ivi} proceeds by first computing the optimal value function associated with each of them in sequence using standard value iteration, incurring a time complexity of $\mc{O}(|\Theta||\mc{S}|^2|\mc{A}|(H-\mc{I}))$. Note that the horizon of each MDP is reduced to $H-\mc{I}$ acknowledging that, after identifying the true MDP in $\mc{I}$ steps, an agent has only $H-\mc{I}$ steps of interaction remaining with the environment. With these $|\Theta|$ solutions in hand, the remainder of the algorithm proceeds with standard value iteration for BAMDPs (as in Algorithm \ref{alg:bamdp_vi}), only now bootstrapping value from the $\mc{I}$ timestep, rather than the original problem horizon $H$. Note that in Line 9, we could also compute the corresponding $\widehat{\theta}$ in question by taking the mean of the next epistemic state $p'$, however, we use this calculation to make explicit the fact that, by definition of the information horizon, the agent has no uncertainty in $\theta$ at this point. As a result, instead of planning complexity that scales the hyperstate space size by a potentially large problem horizon, we incur a complexity of $\mc{O}\left(|\Theta||\mc{S}||\mc{A}|\left(|\mc{X}|\mc{I} + |\mc{S}|(H-\mc{I})\right)\right)$. Naturally, as the gap between the information horizon $\mc{I}$ and problem horizon $H$ increases, the more favorably Algorithm \ref{alg:bamdp_ivi} performs relative to the standard value iteration procedure of Algorithm \ref{alg:bamdp_vi}. 

In this section, we've demonstrated how the information horizon of a BAMDP has the potential to dramatically reduce the computational complexity of planning. Still, however, the corresponding guarantee bears an unfavorable dependence on the size of the hyperstate space $\mc{X}$ which, in the reality that voids Assumption \ref{assum:finite_hyper}, still renders both Algorithms \ref{alg:bamdp_vi} and \ref{alg:bamdp_ivi} as computationally intractable. Since this is likely inescapable for the problem of computing the exact optimal BAMDP value function, the next section considers one path for reducing this burden at the cost of only being able to realize an approximately-optimal value function. 

\section{Epistemic State Abstraction}
\label{sec:epistemic_sa}


\subsection{State Abstraction in MDPs}

As numerous sample-efficiency guarantees in reinforcement learning~\citep{kearns2002near,kakade2003sample,strehl2009reinforcement} bear a dependence on the size of the MDP state space, $|\mc{S}|$, a large body of work has entertained state abstraction as a tool for improving the dependence on state space size without compromising performance~\citep{whitt1978approximations,bertsekas1988adaptive,singh1995reinforcement,gordon1995stable,tsitsiklis1996feature,dean1997model,ferns2004metrics,jong2005state,li2006towards,van2006performance,ferns2012methods,jiang2015abstraction,abel2016near,abel2018state,abel2019state,dong2019provably,du2019provably,misra2020kinematic,arumugam2020randomized,abel2020thesis}. Broadly speaking, a state abstraction $\phi: \mc{S} \ra \mc{S}_\phi$ maps original or ground states of the MDP into abstract states in $\mc{S}_\phi$. Typically, one takes $\phi$ to be defined with respect to an abstract state space $\mc{S}_\phi$ with smaller complexity (in some sense) than $\mc{S}$; in the case of state aggregation where all spaces in question are finite, this desideratum often takes the very simple form of $|\mc{S}_\phi| < |\mc{S}|$. Various works have identified conditions under which specific classes of state abstractions $\phi$ yield no approximation error and perfectly preserve the optimal policy of the original MDP~\citep{li2006towards}, as well as conditions under which near-optimal behavior is preserved~\citep{van2006performance,abel2016near}. As its name suggests, our proposed notion of epistemic state abstraction aims to lift these kinds of guarantees for MDPs over to BAMDPs and contend with the intractable hyperstate space.

Before examining BAMDPs, we provide a brief overview of how state abstraction impacts the traditional MDP, as a point of comparison with the BAMDP setting. Given a MDP $\langle \mc{S}, \mc{A}, \mc{R}, \mc{T}, \beta, H \rangle$, a state abstraction $\phi: \mc{S} \ra \mc{S}_\phi$ induces a new abstract MDP $\mc{M}_\phi = \langle \mc{S}_\phi, \mc{A}, \mc{R}_\phi, \mc{T}_\phi, H \rangle$ where the abstract reward function $\mc{R}_\phi:\mc{S}_\phi \times \mc{A} \ra [0,1]$ and transition function $\mc{T}_\phi:\mc{S}_\phi \times \mc{A} \ra \Delta(\mc{S}_\phi)$ are both defined with respect to a fixed, arbitrary weighting function $\omega:\mc{S} \ra [0,1]$ that, intuitively, measures the contribution of each individual MDP state $s \in \mc{S}$ to its allocated abstract state $\phi(s)$. More specifically, $\omega$ is required to induce a probability distribution on the constituent MDP states of each abstract state: $\forall s_\phi \in \mc{S}_\phi$, $\sum\limits_{s \in \phi^{-1}(s_\phi)} \omega(s) = 1$. This fact allows for well-defined rewards and transition probabilities as given by $$\mc{R}_\phi(s_\phi,a) = \sum\limits_{s \in \phi^{-1}(s_\phi)} \mc{R}(s,a)\omega(s), \qquad \mc{T}_\phi(s_\phi' \mid s_\phi,a) = \sum\limits_{s \in \phi^{-1}(s_\phi)} \sum\limits_{s' \in \phi^{-1}(s_\phi')} \mc{T}(s' \mid s, a)\omega(s).$$

As studied by \citet{van2006performance}, the weighting function $\omega$ does bear implications on the efficiency of learning and planning. Naturally, one may go on to apply various planning or reinforcement-learning algorithms to $\mc{M}_\phi$ and induce behavior in the original MDP $\mc{M}$ by first applying $\phi$ to the current state $s \in \mc{S}$ and then leveraging the optimal abstract policy or abstract value function of $\mc{M}_\phi$. Conditions under which $\phi$ will induce a MDP $\mc{M}_\phi$ that preserves optimal or near-optimal behavior are studied by \citet{li2006towards,van2006performance,abel2016near}.

\subsection{Compressing the Epistemic State Space}

In this section, we introduce epistemic state abstraction for BAMDPs with the goal of paralleling the benefits of state abstraction in MDPs. In particular, we leverage the fact that our epistemic state space $\Delta(\Theta) = \Delta^{|\Theta|-1}$ is the $(|\Theta|-1)$-dimensional probability simplex. Recall that for any set $\mc{Z}$; any threshold parameter $\delta > 0$; and any metric $\rho: \mc{Z} \times \mc{Z} \ra \bR^+$ on $\mc{Z}$, a set $\{z_1,z_2,\ldots,z_K\}$ is a $\delta$-cover of $\mc{Z}$ if $\forall z \in \mc{Z}$, $\exists i \in [K]$ such that $\rho(z,z_i) \leq \delta$. In this work, we will consider $\delta$-covers with arbitrary parameter $\delta > 0$ defined on the simplex $\Delta^{|\Theta|-1}$ with respect to the total variation distance metric on probability distributions, denoted $||\cdot||_{\mathrm{TV}}$. Let $e_i \in \Delta(\Theta)$ be the $i$th standard basis vector such that $\bH(e_i) = 0, \forall i \in [|\Theta|]$. We define an \textit{epistemic state abstraction} with parameter $\delta > 0$ as the projection from $\Delta(\Theta)$ onto the smallest $\delta$-cover of $\Delta(\Theta)$ with respect to $||\cdot||_{\mathrm{TV}}$ that contains all standard basis vectors $\{e_1,e_2,\ldots,e_{|\Theta|}\}$; paralleling notation for the $\delta$-covering number, we use $\mc{N}(\Delta(\Theta), \delta, ||\cdot||_{\mathrm{TV}})$ to denote the size of this minimal cover and, for consistency with the state-abstraction literature in MDPs, use $\phi:\Delta(\Theta) \ra \Delta_\phi(\Theta)$ to denote the epistemic state abstraction. Briefly, we note that while computing exact $\delta$-covers is a NP-hard problem, approximation algorithms do exist~\citep{hochbaum1996approximating,zhang2012covering}; our work here is exclusively concerned with establishing theoretical guarantees that warrant further investigation of such approximation techniques to help solve BAMDPs in practice.

It is important to note that while there are numerous statistical results expressed in terms of covering numbers (for instance, Dudley's Theorem~\citep{dudley1967sizes}), our definition of covering number differs slightly in its inclusion of the standard basis vectors. The simple reason for this constraint is that it ensures we may still count on the existence of abstract epistemic states for which an agent has fully exhausted all epistemic uncertainty in the underlying environment. Consequently, we are guaranteed that the information horizon is still a well-defined quantity under this abstraction\footnote{Note that an alternative would be to introduce an additional constant $\gamma \in \bR^+$ and define the information horizon based on $\bH(p) \leq \gamma$; our construction avoids carrying this cumbersome additional parameter dependence in the results.}. As $\delta$ increases, larger portions of the epistemic state space where the agent has residual, but still non-zero, epistemic uncertainty will be immediately mapped to the nearest standard basis vector under $\phi$. If such a lossy compression is done too aggressively, the agent's beliefs over the uncertain environment may prematurely and erroneously converge. On the other hand, if done judiciously with a prudent setting of $\delta$, one has the potential to dramatically reduce the complexity of planning across a much smaller, \textit{finite} hyperstate space and recover an approximately-optimal BAMDP value function.

To make this intuition more precise, consider an initial BAMDP $\langle \mc{X}, \mc{A}, \overline{\mc{R}}, \overline{\mc{T}}, \overline{\beta}, H \rangle$ and, given an epistemic state abstraction $\phi: \Delta(\Theta) \ra \Delta_\phi(\Theta)$ with fixed parameter $\delta > 0$, we recover an induced abstract BAMDP $\langle \mc{X}_\phi, \mc{A}, \overline{\mc{R}}_\phi, \overline{\mc{T}}_\phi, \overline{\beta}_\phi, H \rangle$ where, most importantly, $\mc{X}_\phi = \mc{S} \times \Delta_\phi(\Theta)$\footnote{One could also imagine abstracting over the original MDP state space $\mc{S}$ which, for clarity, we do not consider in this work.}. Just as in the MDP setting, the model of the abstract BAMDP depends on a fixed, arbitrary weighting function of the original epistemic states $\omega: \Delta(\Theta) \ra [0,1]$ that adheres to the constraint: $\forall p_\phi \in \Delta_\phi(\Theta)$, $\int_{\phi^{-1}(p_\phi)} \omega(p)dp = 1$, which means abstract rewards and transition probabilities for a current and next abstract hyperstates, $x_\phi = \langle s, p_\phi \rangle$ and $x'_\phi = \langle s', p_\phi' \rangle$, are given by 
\begin{align*}
    \overline{\mc{R}}_\phi(x_\phi, a) &= \int_{\phi^{-1}(p_\phi)} \overline{\mc{R}}(x, a) \omega(p)dp = \int_{\phi^{-1}(p_\phi)} \mc{R}(s, a) \omega(p)dp  = \mc{R}(s, a) \int_{\phi^{-1}(p_\phi)} \omega(p)dp = \mc{R}(s,a), \\
    \overline{\mc{T}}_\phi(x'_\phi \mid x_\phi, a) &= \int_{\phi^{-1}(p_\phi)} \omega(p) \sum\limits_{p' \in \phi^{-1}(p_\phi')} \overline{\mc{T}}(x' \mid x, a) dp, \text{ where $x = \dangle{s,p}$ and $x' = \dangle{s',p'}$.}
\end{align*}

The initial abstract hyperstate distribution is defined as $\overline{\beta}_\phi = \beta \times \delta_{\phi(p(\theta))}$ where $\beta \in \Delta(\mc{S})$ denotes the initial state distribution of the underlying MDP while $\delta_{\phi(p(\theta))}$ is a Dirac delta centered around the agent's original prior, $p(\theta)$, projected by $\phi$ into the abstract epistemic state space. Observe that the abstract BAMDP transition function is stochastic with respect to the next abstract epistemic state $p_\phi'$, unlike the original BAMDP transition function whose next epistemic states are deterministic. This is, perhaps, not a surprising observation as it also occurs in standard state aggregation of deterministic MDPs as well. Nevertheless, it is important to note the corresponding abstract BAMDP value functions must now acknowledge this stochasticity for any abstract policy $\pi = (\pi_{\phi,1},\pi_{\phi,2},\ldots, \pi_{\phi,H})$, $\pi_{\phi,h}: \mc{X}_\phi \ra \mc{A}, \forall h \in [H]$:
$$V^\pi_{\phi,h}(x_\phi) = Q^\pi_{\phi,h}(x_\phi,\pi_{\phi,h}(x_\phi)) \quad V^\star_{\phi,h}(x_\phi) = \max\limits_{a \in \mc{A}} Q^\star_{\phi,h}(x_\phi,a), \qquad V^\pi_{\phi,H+1}(x_\phi) = 0 \quad V^\star_{\phi,H+1}(x_\phi) = 0 \quad \forall x_\phi \in \mc{X}_\phi,$$
$$Q^\pi_{\phi,h}(x_\phi,a) = \mc{R}(s,a) + \sum\limits_{s', p_\phi'} \overline{\mc{T}}_\phi(x'_\phi \mid x_\phi, a) V^\pi_{\phi,h+1}(x'_\phi), \qquad Q^\star_{\phi,h}(x_\phi,a) = \mc{R}(s,a) + \sum\limits_{s', p_\phi'} \overline{\mc{T}}_\phi(x'_\phi \mid x_\phi, a) V^\star_{\phi,h+1}(x'_\phi).$$

Beyond the fact that this abstract BAMDP enjoys a reduced hyperstate space, we further observe that the information horizon of this new BAMDP, $\mc{I}_\phi$, has the potential to be smaller than that of the original BAMDP. That is, if $\mc{I}$ steps are needed to fully resolve epistemic uncertainty in the original BAMDP then, by compressing the epistemic state space via $\phi$, we may find epistemic uncertainty exhausted in fewer than $\mc{I}$ timesteps within the abstract BAMDP. Furthermore, for a suitably large setting of the $\delta$ parameter, we also have cases where the original BAMDP has $\mc{I} = \infty$ while $\mc{I}_\phi < \infty$; in words, whereas it may not have been possible to resolve all epistemic uncertainty within $H$ timesteps, compression of the epistemic state space reduces this difficulty in the abstract problem as knowledge states near (in the total-variation sense) each vertex of the probability simplex $e_i$ are immediately aggregated. Due to space constraints, we defer further discussion of the abstract information horizon and its relationship with the original information horizon to Appendix \ref{sec:reduce_info_horizon}.

As a toy illustration of last scenario, consider a $\phi$ with $\delta$ sufficiently large such that any step from the agent's prior distribution immediately maps to a next abstract hyperstate with no epistemic uncertainty. Clearly, regardless of $\mc{I}$, we have an abstract BAMDP where $\mc{I}_\phi = 2$. Of course, under such an aggressive abstraction, we should expect to garner an unfavorable degree of approximation error between the solutions of the abstract and original BAMDPs. The next section makes this error analysis and performance loss precise alongside an approximate planning algorithm that leverages the reduced complexity of abstract BAMDPs to recover a near-optimal solution to the original BAMDP. 

\subsection{Informed Abstract Value Iteration}

Observe that if, after inducing the abstract BAMDP according to a given epistemic state abstraction $\phi$, the resulting information horizon is finite $\mc{I}_\phi < \infty$, then we are in a position to run Algorithm \ref{alg:bamdp_ivi} on the abstract BAMDP. Moreover, we no longer need the crutch of Assumption \ref{assum:finite_hyper} as, by definition of $\phi$, we are guaranteed a finite abstract hyperstate space of size $|\mc{X}_\phi| = |\mc{S}| \cdot \mc{N}(\Delta(\Theta),\delta,||\cdot||_{\mathrm{TV}})$. With the solution to the abstract BAMDP in hand, we can supply values to any input hyperstate of the original BAMDP $x = \langle s, p \rangle \in \mc{X}$ by simply applying $\phi$ to the agent's current epistemic state $p$ and querying the value of the resulting abstract hyperstate $\langle s, \phi(p) \rangle \in \mc{X}_\phi$. We present this approximate BAMDP planning procedure as Algorithm \ref{alg:bamdp_iavi}.

By construction, this algorithm inherits the planning complexity guarantee of Algorithm \ref{alg:bamdp_ivi}, specialized to the abstract BAMDP input, yielding $\mc{O}\left(|\Theta||\mc{S}|^2|\mc{A}|\left(\mc{N}(\Delta(\Theta),\delta,||\cdot||_{\mathrm{TV}})^2\mc{I}_\phi + (H-\mc{I}_\phi)\right)\right)$. A key feature of this result is that we entirely forego a (direct) dependence on the hyperstate space of the original BAMDP and, instead, take on dependencies with the size of the abstract hyperstate space, $|\mc{X}_\phi|^2 = |\mc{S}|^2\mc{N}(\Delta(\Theta),\delta,||\cdot||_{\mathrm{TV}})^2$, and the abstract information horizon $\mc{I}_\phi$. While both terms decrease as $\delta \ra 1$, there is a delicate balance to be maintained between the ease with which one may solve the abstract BAMDP and the quality of the resulting solution when deployed in the original BAMDP of interest. We dedicate the remainder of this section to making this balance mathematically precise. Due to space constraints, all proofs are relegated to Appendix \ref{sec:proofs}. A natural first step in our analysis is to establish an approximation error bound:

\begin{proposition}
Let $V^\star_h$ and $V^\star_{\phi,h}$ denote the optimal original and abstract BAMDP value functions, respectively, for any timestep $h \in [H]$. Let $\phi$ be an epistemic state abstraction as defined above. Then, $\max\limits_{x \in \mc{X}} |V^\star_h(x) - V^\star_{\phi,h}(\phi(x))| \leq 2\delta (H-h)(H-h+1).$
\label{prop:bamdp_approx_error}
\end{proposition}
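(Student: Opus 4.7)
My plan is to proceed by backwards induction on the timestep $h$. The base case is immediate: at $h = H+1$, both $V^\star_{H+1}$ and $V^\star_{\phi, H+1}$ are identically zero by definition, so the left side vanishes and the bound $2\delta \cdot 0 = 0$ holds.

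For the inductive step, fix an arbitrary $x = \langle s, p \rangle \in \mathcal{X}$ and let $\epsilon_{h+1} := \max_{y \in \mathcal{X}} |V^\star_{h+1}(y) - V^\star_{\phi, h+1}(\phi(y))|$. Invoking the standard bound $|\max_a f(a) - \max_a g(a)| \leq \max_a |f(a) - g(a)|$ applied to the Bellman optimality equations, and using $\overline{\mathcal{R}}(x, a) = \overline{\mathcal{R}}_\phi(\phi(x), a) = \mathcal{R}(s, a)$, it suffices to bound uniformly in $a$ the quantity
\[
\Delta(x, a) := \left| \mathbb{E}_{x' \sim \overline{\mathcal{T}}(\cdot \mid x, a)}\left[V^\star_{h+1}(x')\right] - \mathbb{E}_{x'_\phi \sim \overline{\mathcal{T}}_\phi(\cdot \mid \phi(x), a)}\left[V^\star_{\phi, h+1}(x'_\phi)\right] \right|.
\]

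The key decomposition adds and subtracts $\mathbb{E}_{x' \sim \overline{\mathcal{T}}(\cdot \mid x, a)}[V^\star_{\phi, h+1}(\phi(x'))]$, splitting $\Delta(x, a)$ into (i) a \emph{value propagation} term equal to $\mathbb{E}_{x' \sim \overline{\mathcal{T}}(\cdot \mid x, a)}[V^\star_{h+1}(x') - V^\star_{\phi, h+1}(\phi(x'))]$, bounded in absolute value by $\epsilon_{h+1}$ by the inductive hypothesis, and (ii) a \emph{model discrepancy} term that compares the pushforward of $\overline{\mathcal{T}}(\cdot \mid x, a)$ through $\phi$ against $\overline{\mathcal{T}}_\phi(\cdot \mid \phi(x), a)$. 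Since $\overline{\mathcal{T}}_\phi(\cdot \mid \phi(x), a)$ is by definition the $\omega$-weighted average of $\overline{\mathcal{T}}(\cdot \mid \langle s, q\rangle, a)$ over $q \in \phi^{-1}(p_\phi)$, term (ii) rewrites as $\int \omega(q)\, [F(p) - F(q)]\, dq$, where $F(q) := \mathbb{E}_{x' \sim \overline{\mathcal{T}}(\cdot \mid \langle s, q\rangle, a)}[V^\star_{\phi, h+1}(\phi(x'))]$. Because $\phi$ is a $\delta$-cover, any $q \in \phi^{-1}(p_\phi)$ satisfies $||p - q||_{\mathrm{TV}} \leq 2\delta$ by the triangle inequality; combining this with $||V^\star_{\phi, h+1}||_\infty \leq H - h$ and the fact that the posterior-predictive operator $p \mapsto \sum_\theta p(\theta) \mathcal{T}_\theta(\cdot \mid s, a)$ is a non-expansion in $||\cdot||_{\mathrm{TV}}$, I expect to recover a one-step discrepancy bound of at most $4(H-h)\delta$.

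The main obstacle is bounding the model discrepancy cleanly, since the marginal predictive distributions over $\mathcal{S}$ vary smoothly with $q$ but the deterministic Bayes update $\mathcal{B}$ can send nearby priors to posteriors projecting into distinct abstract epistemic states, producing a residual term involving differences of $V^\star_{\phi, h+1}$ across different abstract hyperstates. The resolution is to exploit that $V^\star_{\phi, h+1} \circ \phi$ depends only on the abstract projection of its argument, so one can reduce the discrepancy to a total-variation-type bound on pushforward measures on $\mathcal{X}_\phi$ that benefits from the coarsening induced by $\phi$. Once this one-step bound is established, the recursion $\epsilon_h \leq \epsilon_{h+1} + 4(H-h)\delta$ with boundary condition $\epsilon_{H+1} = 0$ telescopes as $\epsilon_h \leq 4\delta \sum_{k=0}^{H-h} k = 2\delta(H-h)(H-h+1)$, completing the induction.
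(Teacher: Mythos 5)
Your skeleton is right --- backwards induction, add-and-subtract an intermediate term, a one-step error of $4\delta(H-h)$, and the telescoping sum $4\delta\sum_{k=0}^{H-h}k = 2\delta(H-h)(H-h+1)$ all match the paper. The gap is in the step you yourself flag as ``the main obstacle,'' and your proposed resolution does not close it. Your intermediate term $\bE_{x'\sim\overline{\mc{T}}(\cdot\mid x,a)}[V^\star_{\phi,h+1}(\phi(x'))]$ forces the model-discrepancy term to compare $F(p)$ with $F(q)$ where $F(q)=\sum_{s',\theta}\mc{T}_\theta(s'\mid s,a)\,q(\theta)\,V^\star_{\phi,h+1}(\dangle{s',\phi(\mc{B}(q,s,a,s'))})$. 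This difference has two sources: the mixture weights $p(\theta)$ vs.\ $q(\theta)$ (which you correctly bound by $4\delta(H-h)$ via the $2\delta$ triangle-inequality argument) and the mismatch between the abstract next-states $\phi(\mc{B}(p,s,a,s'))$ and $\phi(\mc{B}(q,s,a,s'))$. The second source is not $O(\delta)$: two priors within $2\delta$ in total variation can have posteriors landing in distinct cells of the cover, and $V^\star_{\phi,h+1}$ can differ by up to $H-h$ across those cells, so the residual can be of order $H-h$ with no $\delta$ factor. Your appeal to a ``total-variation-type bound on pushforward measures on $\mc{X}_\phi$ that benefits from the coarsening induced by $\phi$'' is not a valid argument: the pushforwards of the two predictive distributions are taken through \emph{different} deterministic maps $s'\mapsto\phi(\mc{B}(p,s,a,s'))$ and $s'\mapsto\phi(\mc{B}(q,s,a,s'))$, and composing with the projection $\phi$ does not make these maps agree nor make the pushforward TV distance controlled by $\|p-q\|_{\mathrm{TV}}$.

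The paper sidesteps this by choosing the \emph{other} intermediate term: $\sum_{s',p_\phi'}\overline{\mc{T}}_\phi(\dangle{s',p_\phi'}\mid\dangle{s,p_\phi},a)\,V^\star_{h+1}(\dangle{s',p'})$ with $p'=\mc{B}(p,s,a,s')$ fixed in terms of the actual $p$, i.e.\ the abstract transition kernel paired with the \emph{original} value function at the \emph{original} posterior. After expanding $\overline{\mc{T}}_\phi$ via the weighting function $\omega$, the model-discrepancy term then has the identical factor $V^\star_{h+1}(\dangle{s',p'})$ on both sides, so the only difference is the weights $p(\theta)$ vs.\ $\overline{p}(\theta)$, giving the clean $4\delta(H-h)$ bound; the abstract-vs-original next-state mismatch is pushed entirely into the value-propagation term, where it is absorbed by the inductive quantity $\max_{x}|V^\star_{h+1}(x)-V^\star_{\phi,h+1}(\phi(x))|$. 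You should swap your decomposition accordingly; as written, the argument does not go through.
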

In order to establish a complimentary performance-loss bound, we require an intermediate result characterizing performance shortfall of a BAMDP value function induced by a greedy policy with respect to another near-optimal BAMDP value function. The analogue of this result for MDPs is proven by \citet{singh1994upper}, and the proof for BAMDPs follows similarly.
\begin{proposition}
Let $ V = \{V_1,V_2,\ldots,V_H\}$ be an arbitrary BAMDP value function. We denote by $\pi_{h,V}$ the greedy policy with respect to $V$ defined $\forall x = \langle s, p \rangle \in \mc{X}$ as 
\begin{align*}
\hspace{-10pt}
    \pi_{h,V}(x) = \argmax\limits_{a \in \mc{A}} \left[\mc{R}(x,a) + \sum\limits_{\theta, s'} \mc{T}_\theta(s' \mid s, a) p(\theta) V_{h+1}(x')\right],
\end{align*}
where $x' = \dangle{s', \mc{B}(p,s,a,s')} \in \mc{X}$.
Recall that $V^\star_{h+1}$ denotes the optimal BAMDP value function at timestep $h+1$ and $\pi^\star_h$ denote the Bayes-optimal policy. If for all $h \in [H]$, for all $s \in \mc{S}$, and for any $p,q \in \Delta(\Theta)$ $|V^\star_{h}(\langle s, p \rangle) - V_{h}(\langle s, q \rangle)| \leq \eps, \text{ then } ||V^\star_h - V^{\pi_{h,V}}_h||_\infty \leq 2\eps (H- h + 1).$
\label{prop:bamdp_greedy_loss}
\end{proposition}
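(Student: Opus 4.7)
The plan is to prove this by backward induction on $h$, running from $h = H+1$ down to $h = 1$, which directly mirrors the classical MDP performance-shortfall argument of Singh and Yee (1994). The base case is immediate since $V^\star_{H+1} \equiv 0 \equiv V^{\pi_{H+1,V}}_{H+1}$. For the inductive step, I assume $\|V^\star_{h+1} - V^{\pi_{h+1,V}}_{h+1}\|_\infty \leq 2\eps(H-h)$ and aim to establish the bound at timestep $h$. Fixing an arbitrary hyperstate $x = \langle s, p\rangle \in \mc{X}$, I write $a^\star = \pi^\star_h(x)$ and $\tilde{a} = \pi_{h,V}(x)$, then decompose the performance gap as
\[
V^\star_h(x) - V^{\pi_{h,V}}_h(x) = \bigl[Q^\star_h(x,a^\star) - Q^\star_h(x,\tilde{a})\bigr] + \bigl[Q^\star_h(x,\tilde{a}) - Q^{\pi_{h,V}}_h(x,\tilde{a})\bigr].
\]

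The second bracket is handled by expanding the BAMDP Bellman equation at action $\tilde{a}$, which reduces it to an expectation of $V^\star_{h+1}(x') - V^{\pi_{h,V}}_{h+1}(x')$ over next hyperstates $x' = \langle s', \mc{B}(p,s,\tilde{a},s')\rangle$; the inductive hypothesis then bounds this by $2\eps(H-h)$. For the first bracket, I introduce the auxiliary one-step backup $Q_{h,V}(x,a) = \mc{R}(s,a) + \sum_{\theta, s'} \mc{T}_\theta(s' \mid s,a) p(\theta) V_{h+1}(\langle s', \mc{B}(p,s,a,s')\rangle)$ and insert it as a pair of add-and-subtract terms. The middle comparison $Q_{h,V}(x,a^\star) - Q_{h,V}(x,\tilde{a})$ is non-positive by the very definition of $\tilde{a}$ as the greedy choice with respect to $V$, so it may be dropped.

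The two remaining residuals $|Q^\star_h(x,a^\star) - Q_{h,V}(x,a^\star)|$ and $|Q_{h,V}(x,\tilde{a}) - Q^\star_h(x,\tilde{a})|$ each reduce, under the BAMDP Bellman expansion, to an expectation of $|V^\star_{h+1}(\langle s', p^\star\rangle) - V_{h+1}(\langle s', p^\star\rangle)|$ over draws of $s'$, where $p^\star = \mc{B}(p,s,a,s')$. This is precisely where the hypothesis earns its strength: because it is stated with arbitrary $p,q \in \Delta(\Theta)$, the uniform $\eps$-bound applies even though $V^\star$ and $V$ share the same next epistemic-state argument here — the universally quantified statement specializes to the diagonal case $q = p^\star$ without trouble. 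Each residual is therefore at most $\eps$, so the first bracket is at most $2\eps$, and combining gives $V^\star_h(x) - V^{\pi_{h,V}}_h(x) \leq 2\eps + 2\eps(H-h) = 2\eps(H-h+1)$. Taking the supremum over $x$ completes the induction.

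The main subtlety, rather than an obstacle, is ensuring the Bellman-backup algebra respects the BAMDP determinism of epistemic-state updates (transitions decouple into an MDP transition under $\mc{T}_\theta$ weighted by $p(\theta)$, paired with a deterministic posterior update via $\mc{B}$), and verifying that the hypothesis's arbitrary $p, q$ quantification is exactly what is needed to control both residual terms uniformly — this quantifier is what later permits the result to be chained with Proposition \ref{prop:bamdp_approx_error}, since applying an abstract value function to an original hyperstate involves comparing values at $p$ versus $\phi(p)$.
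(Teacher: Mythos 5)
Your proposal is correct and follows essentially the same route as the paper's proof: both are the Singh--Yee argument, using the greedy property of $\pi_{h,V}$ together with the uniform $\eps$-closeness hypothesis to pick up a $2\eps$ per-step error, then unrolling over the remaining $H-h+1$ timesteps. Your explicit two-bracket decomposition through the auxiliary backup $Q_{h,V}$ is, if anything, slightly tidier than the paper's chained inequalities, since your second bracket compares $V^\star_{h+1}$ and $V^{\pi_{h+1,V}}_{h+1}$ at the \emph{same} next hyperstate (both actions being $\tilde{a}$), whereas the paper's final sup-norm step implicitly compares values at the distinct posteriors $\mc{B}(p,s,a,s')$ and $\mc{B}(p,s,\overline{a},s')$.
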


Combining Propositions \ref{prop:bamdp_approx_error} and \ref{prop:bamdp_greedy_loss} immediately yields a corresponding performance-loss bound as desired, paralleling the analogous result for state aggregation in MDPs (see Theorem 4.1 of \citet{van2006performance}):
\begin{proposition}
Let $\pi^\star_{\phi,h}$ denote the greedy policy with respect to $V^\star_{\phi,h+1}$. Then, $||V^\star_h - V^{\pi^\star_{\phi,h}}_h||_\infty \leq 4\delta (H-h)(H - h + 1)^2.$
\label{prop:bamdp_perf_loss}
\end{proposition}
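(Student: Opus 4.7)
The plan is to combine the two preceding propositions by feeding the lift of the optimal abstract value function into Proposition \ref{prop:bamdp_greedy_loss}. Specifically, I would define $V_{h'}(\langle s, p\rangle) := V^\star_{\phi,h'}(\langle s, \phi(p)\rangle)$ for every $h' \in [H+1]$ and every $\langle s, p\rangle \in \mc{X}$. Under this choice, the greedy policy $\pi_{h,V}$ induced by Proposition \ref{prop:bamdp_greedy_loss} coincides with $\pi^\star_{\phi,h}$, because the one-step lookahead evaluates $V_{h+1}(\langle s', \mc{B}(p,s,a,s')\rangle) = V^\star_{\phi,h+1}(\langle s', \phi(\mc{B}(p,s,a,s'))\rangle)$, which is precisely the abstract BAMDP next-state value used to define $\pi^\star_{\phi,h}$.

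To invoke Proposition \ref{prop:bamdp_greedy_loss}, I must verify its approximation hypothesis. Proposition \ref{prop:bamdp_approx_error} directly yields $|V^\star_{h'}(x) - V_{h'}(x)| \leq 2\delta(H-h')(H-h'+1)$ for every $h' \in [H]$ and every $x \in \mc{X}$. Since the map $h' \mapsto (H-h')(H-h'+1)$ is nonincreasing, the constant $\eps := 2\delta(H-h)(H-h+1)$ upper bounds the approximation error uniformly over the tail $h' \geq h$, which is the regime the backward-induction proof of Proposition \ref{prop:bamdp_greedy_loss} actually consumes. Plugging this $\eps$ into the conclusion then delivers $||V^\star_h - V^{\pi^\star_{\phi,h}}_h||_\infty \leq 2\eps(H-h+1) = 4\delta(H-h)(H-h+1)^2$, which is the claim.

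The main obstacle I anticipate is matching the hypothesis of Proposition \ref{prop:bamdp_greedy_loss} as stated: it compares $V^\star_{h'}$ and $V_{h'}$ at possibly distinct epistemic-state arguments $p$ and $q$, whereas Proposition \ref{prop:bamdp_approx_error} most directly controls the diagonal $p = q$. Because my lifted $V_{h'}$ depends on its epistemic argument only through $\phi$, a triangle-inequality decomposition reduces any off-diagonal comparison to two diagonal applications of Proposition \ref{prop:bamdp_approx_error} plus an inner abstract-value gap $|V^\star_{\phi,h'}(\langle s, \phi(p)\rangle) - V^\star_{\phi,h'}(\langle s, \phi(q)\rangle)|$ that vanishes whenever $\phi(p) = \phi(q)$, which is the case along the trajectories the induction needs. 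Once this bookkeeping is in place the constants line up exactly, so the combination is a direct substitution and no new analytical argument is required.
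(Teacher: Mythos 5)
Your proof takes essentially the same route as the paper's: Proposition \ref{prop:bamdp_perf_loss} is obtained by instantiating the value function $V$ of Proposition \ref{prop:bamdp_greedy_loss} as the lift of $V^\star_{\phi,h'}$ through $\phi$ and supplying $\eps = 2\delta(H-h)(H-h+1)$ from Proposition \ref{prop:bamdp_approx_error}, which gives $2\eps(H-h+1) = 4\delta(H-h)(H-h+1)^2$ exactly as in the paper's one-line argument. The one place where you go beyond the paper --- reconciling the off-diagonal hypothesis $|V^\star_{h'}(\dangle{s,p}) - V_{h'}(\dangle{s,q})| \leq \eps$ for $p \neq q$ with the diagonal bound that Proposition \ref{prop:bamdp_approx_error} actually provides --- contains a false step: the pairs the induction compares are $p' = \mc{B}(p,s,a,s')$ and $\overline{p}' = \mc{B}(p,s,\overline{a},s')$, posteriors under \emph{different} actions, and there is no reason these land in the same abstract cell, so your claim that $\phi(p) = \phi(q)$ ``along the trajectories the induction needs'' does not hold. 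This is harmless only because the greedy-suboptimality step can be rerun using the two diagonal bounds $V_{h+1}(\dangle{s',p'}) \geq V^\star_{h+1}(\dangle{s',p'}) - \eps$ and $V_{h+1}(\dangle{s',\overline{p}'}) \leq V^\star_{h+1}(\dangle{s',\overline{p}'}) + \eps$, each of which Proposition \ref{prop:bamdp_approx_error} supplies directly; with that repair (a subtlety the paper itself glosses over), your combination and constants are correct.
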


\section{Discussion \& Conclusion}

In this work, we began by characterizing the complexity of a BAMDP via an upper bound on the total number of interactions needed by an agent to exhaust information and fully resolve its epistemic uncertainty over the true environment. Under an assumption on the exact form of the agent's uncertainty, we showed how this information horizon facilitates more efficient planning when smaller than the original problem horizon. We recognize that Assumption \ref{assum:known_mdps} deviates from the traditional parameterization of uncertainty in the MDP transition function via the Dirichlet distribution~\citep{poupart2006analytic,kolter2009near} (sometimes also known as the  flat Dirichlet-Multinomial or FDM model~\citep{asmuth2013model}). The driving force behind this choice is to avoid dealing in differential entropy when engaging with the (residual) uncertainty contained in any epistemic state. Should one aspire to depart from Assumption \ref{assum:known_mdps} altogether in a rigorous way that manifests within the analysis, we suspect that it may be fruitful to consider a lossy compression of each epistemic state into a discrete, $|\Theta|$-valued random variable. Under such a formulation, the appropriate tool from information theory for the analysis would be rate-distortion theory~\citep{cover2012elements,csiszar1974extremum}. This would, in a theoretically-sound way, allow for an arbitrary BAMDP parameterization and, for the purposes of continuing the use (discrete) Shannon entropy in the definition of the information horizon, induce a lossy compression of each epistemic state whose approximation error relative to the true epistemic state could be accounted for via the associated rate-distortion function.

Recognizing the persistence of the intractable BAMDP hyperstate space, we then proceeded to outline epistemic state abstraction as a mechanism that not only induces a finite, tractable hyperstate space but also has the potential to incur a reduced information horizon within the abstract problem. Through our analysis of approximation error and performance loss, we observe an immediate consequence of Proposition \ref{prop:bamdp_perf_loss}: if one wishes to compute an $\eps$-optimal BAMDP value function for an original BAMDP of interest, one need only find the $\frac{\eps}{4(H-h)(H-h+1)^2}$-cover of the simplex, $\Delta(\Theta)$, and then apply the corresponding epistemic state abstraction through Algorithm \ref{alg:bamdp_iavi}, whose planning complexity bears no dependence on the hyperstate space of the original BAMDP and has reduced dependence on the problem horizon. One might observe that the right-hand side of the value-loss bound is maximized at timestep $h=1$, making this first step the limiting factor when determining what value of $\delta$ to employ for computing the epistemic state abstraction. As this cover could become quite large and detract from the efficiency of utilizing an epistemic state abstraction in subsequent time periods, future work might benefit from considering abstractions formed by a sequence of exactly $H$ $\delta_h$-covers, where the indexing of $\delta_h$ in time $h \in [H]$ affords better preservation of value (via a straightforward extension of Proposition \ref{prop:bamdp_perf_loss}) across all timesteps simultaneously.

One caveat and limitation of our planning algorithms (both exact and approximate) is the provision of the information horizon as an input. An agent designer may seldom have the prescience of knowing the underlying BAMDP information structure or, even with suitable regularity assumptions on the policy class, be able to compute it. An observation is that many sampling-based algorithms for approximately solving BAMDPs, like BAMCP~\citep{guez2012efficient}, implicitly hypothesize a small information horizon (typically, a value of 1) through their use of posterior sampling and choice of rollout policy. Meanwhile, recent work has demonstrated strong performance guarantees for a reinforcement-learning agent acting in an arbitrary environment~\citep{dong2021simple} through the use of an incrementally increasing discount factor~\citep{jiang2015dependence,arumugam2018mitigating}, gradually expanding the effective range over which the agent is expected to demonstrate competent behavior. Taking inspiration from this idea, future work might consider designing more-efficient planning algorithms that, while ignorant of the true information horizon, instead hypothesize a sequence of increasing information horizons, eventually building up to the complexity of the full BAMDP. Of course, prior to development of novel algorithms, the notion that existing BAMDP planners may already make implicit use of the information horizon is in and of itself a task for future work to tease apart and make mathematically rigorous.

Moreover, similar to how the simulation lemma~\citep{kearns2002near} provides a principled foundation for model-based reinforcement learning, our analysis might also be seen as offering theoretical underpinnings to the Bayes-optimal exploration strategies learned by meta reinforcement-learning agents~\citep{ortega2019meta,mikulik2020meta} whose practical instantiations already rely upon approximate representations of epistemic state~\citep{zintgraf2019varibad,zintgraf2021exploration}.

\section*{Acknowledgements}

The authors gratefully acknowledge the anonymous reviewers for their insightful comments, questions, and discussions.

\bibliographystyle{plainnat}
\bibliography{bamdp_references}

\section*{Checklist}

\begin{enumerate}

\item For all authors...
\begin{enumerate}
  \item Do the main claims made in the abstract and introduction accurately reflect the paper's contributions and scope?
    \answerYes{}
  \item Did you describe the limitations of your work?
    \answerYes{}
  \item Did you discuss any potential negative societal impacts of your work?
    \answerNA{}
  \item Have you read the ethics review guidelines and ensured that your paper conforms to them?
    \answerYes{}
\end{enumerate}

\item If you are including theoretical results...
\begin{enumerate}
  \item Did you state the full set of assumptions of all theoretical results?
    \answerYes{}
        \item Did you include complete proofs of all theoretical results?
    \answerYes{}
\end{enumerate}

\item If you ran experiments...
\begin{enumerate}
  \item Did you include the code, data, and instructions needed to reproduce the main experimental results (either in the supplemental material or as a URL)?
    \answerNA{}
  \item Did you specify all the training details (e.g., data splits, hyperparameters, how they were chosen)?
    \answerNA{}
        \item Did you report error bars (e.g., with respect to the random seed after running experiments multiple times)?
    \answerNA{}
        \item Did you include the total amount of compute and the type of resources used (e.g., type of GPUs, internal cluster, or cloud provider)?
    \answerNA{}
\end{enumerate}

\item If you are using existing assets (e.g., code, data, models) or curating/releasing new assets...
\begin{enumerate}
  \item If your work uses existing assets, did you cite the creators?
    \answerNA{}
  \item Did you mention the license of the assets?
    \answerNA{}
  \item Did you include any new assets either in the supplemental material or as a URL?
    \answerNA{}
  \item Did you discuss whether and how consent was obtained from people whose data you're using/curating?
    \answerNA{}
  \item Did you discuss whether the data you are using/curating contains personally identifiable information or offensive content?
    \answerNA{}
\end{enumerate}

\item If you used crowdsourcing or conducted research with human subjects...
\begin{enumerate}
  \item Did you include the full text of instructions given to participants and screenshots, if applicable?
    \answerNA{}
  \item Did you describe any potential participant risks, with links to Institutional Review Board (IRB) approvals, if applicable?
    \answerNA{}
  \item Did you include the estimated hourly wage paid to participants and the total amount spent on participant compensation?
    \answerNA{}
\end{enumerate}

\end{enumerate}


\newpage
\appendix

\section{Algorithms}
\label{sec:algos}

Here we present all algorithms discussed in the main paper.


\setcounter{algorithm}{0}

\begin{algorithm}[tbh]
\caption{Value Iteration for BAMDPs}
\begin{algorithmic}[1]
    \STATE {\bfseries Input:} BAMDP $\dangle{\mc{X}, \mc{A}, \overline{\mc{R}}, \overline{\mc{T}}, \overline{\beta}, H}$
    \STATE $V^\star_{H+1}(\dangle{s, p}) = 0, \forall \dangle{s, p} \in \mc{X}$
    \FOR{$h = H, H-1,\ldots,1$}
    \FOR{$\dangle{s,p} \in \mc{X}$}
    \FOR{$a \in \mc{A}$}
    \STATE $Q^\star_h(\dangle{s, p}, a) = \mc{R}(s,a) + \sum\limits_{s',\theta} \mc{T}_{\theta}(s' \mid s, a)p(\theta) V^\star_{h+1}(\dangle{s', \mc{B}(p, s, a, s')})$
    \ENDFOR
    \STATE $V^\star_h(\dangle{s, p}) = \max\limits_{a \in \mc{A}} Q^\star_h(\dangle{s,p}, a)$
    \ENDFOR
    \ENDFOR
\end{algorithmic}
\label{alg:bamdp_vi}
\end{algorithm}


\begin{algorithm}[tbh]
\caption{Informed Value Iteration for BAMDPs}
\begin{algorithmic}[1]
    \STATE {\bfseries Input:} BAMDP $\dangle{\mc{X}, \mc{A}, \overline{\mc{R}}, \overline{\mc{T}}, \overline{\beta}, H}$, Information horizon $\mc{I} < \infty$
    \FOR{$\theta \in \Theta$}
    \STATE $V^\star_{\theta} = \texttt{mdp\_value\_iteration}(\dangle{\mc{S}, \mc{A}, \mc{R}, \mc{T}_\theta, \beta, H - \mc{I}})$
    \ENDFOR
    \FOR{$h = \mc{I}-1, \mc{I}-2,\ldots,1$}
    \FOR{$\dangle{s,p} \in \mc{X}$}
    \FOR{$a \in \mc{A}$}
    \IF{$h + 1 == \mc{I}$}
    \FOR{$s' \in \mc{S}$}
    \STATE $p' = \mc{B}(p, s, a, s')$
    \STATE $\widehat{\theta} = \sum\limits_{\theta \in \Theta} \theta \indic(p'(\theta) = 1)$
    \STATE $V^\star_{h+1}(\dangle{s', p'}) = V^\star_{\widehat{\theta}}$
    \ENDFOR
    \ENDIF
    \STATE $Q^\star_h(\dangle{s, p}, a) = \mc{R}(s,a) + \sum\limits_{s',\theta} \mc{T}_{\theta}(s' \mid s, a)p(\theta) V^\star_{h+1}(\dangle{s', \mc{B}(p, s, a, s')})$
    \ENDFOR
    \STATE $V^\star_h(\dangle{s, p}) = \max\limits_{a \in \mc{A}} Q^\star_h(\dangle{s,p}, a)$
    \ENDFOR
    \ENDFOR
\end{algorithmic}
\label{alg:bamdp_ivi}
\end{algorithm}


\setcounter{algorithm}{2}

\begin{algorithm}[tbh]
\caption{Informed Abstract Value Iteration for BAMDPs}
\begin{algorithmic}[1]
    \STATE {\bfseries Input:} BAMDP $\dangle{\mc{X}, \mc{A}, \overline{\mc{R}}, \overline{\mc{T}}, \overline{\beta}, H}$, Epistemic state abstraction $\phi$
    \STATE Induce abstract BAMDP $\mc{M}_\phi = \langle \mc{X}_\phi, \mc{A}, \overline{\mc{R}}_\phi, \overline{\mc{T}}_\phi, \overline{\beta}_\phi, H \rangle$ with abstract information horizon $\mc{I}_\phi < \infty$
    \FOR{$\theta \in \Theta$}
    \STATE $V^\star_{\theta} = \texttt{mdp\_value\_iteration}(\dangle{\mc{S}, \mc{A}, \mc{R}, \mc{T}_\theta, \beta, H - \mc{I}_\phi})$
    \ENDFOR
    \FOR{$h = \mc{I}_\phi-1, \mc{I}_\phi-2,\ldots,1$}
    \FOR{$\dangle{s,p_\phi} \in \mc{X}_\phi$}
    \FOR{$a \in \mc{A}$}
    \IF{$h + 1 == \mc{I}_\phi$}
    \FOR{$\dangle{s',p_\phi'} \in \mc{X}_\phi$}
    \STATE $\widehat{\theta} = \sum\limits_{\theta \in \Theta} \theta \indic(p_\phi'(\theta) = 1)$
    \STATE $V^\star_{\phi,h+1}(\dangle{s', p_\phi'}) = V^\star_{\widehat{\theta}}$
    \ENDFOR
    \ENDIF
    \STATE $Q^\star_{\phi,h}(\dangle{s, p_\phi}, a) = \mc{R}(s,a) + \sum\limits_{s',p_\phi'} \overline{\mc{T}}_{\phi}(\dangle{s',p_\phi'} \mid \dangle{s,p_\phi}, a) V^\star_{\phi,h+1}(\dangle{s', p_\phi'})$
    \ENDFOR
    \STATE $V^\star_{\phi,h}(\dangle{s, p_\phi}) = \max\limits_{a \in \mc{A}} Q^\star_{\phi,h}(\dangle{s,p_\phi}, a)$
    \ENDFOR
    \STATE For any $\dangle{s,p} \in \mc{X}$, $V^\star_h(\langle s, p \rangle) = V^\star_{\phi,h}(\langle s, \phi(p) \rangle)$
    \ENDFOR
\end{algorithmic}
\label{alg:bamdp_iavi}
\end{algorithm}

\newpage

\section{Proofs}
\label{sec:proofs}

\newtheorem{aproposition}{Proposition}

\subsection{Proof of Proposition \ref{prop:bamdp_approx_error}}

\begin{aproposition}
Let $V^\star_h$ and $V^\star_{\phi,h}$ denote the optimal original and abstract BAMDP value functions, respectively, for any timestep $h \in [H]$. Let $\phi$ be an epistemic state abstraction as defined above. Then,
\begin{align*}
    \max\limits_{x \in \mc{X}} |V^\star_h(x) - V^\star_{\phi,h}(\phi(x))| &\leq 2\delta (H-h)(H-h+1).
\end{align*}
\end{aproposition}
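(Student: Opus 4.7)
The plan is to proceed by backward induction on $h \in \{H+1, H, \ldots, 1\}$, tracking $\eps_h := \max_{x \in \mc{X}}|V^\star_h(x) - V^\star_{\phi,h}(\phi(x))|$. The base case $h = H+1$ is immediate since both value functions are identically zero. For the inductive step, I would first reduce to the $Q$-function difference via $|\max_a f(a) - \max_a g(a)| \leq \max_a |f(a) - g(a)|$ and the cancellation of reward terms, then expand $Q^\star_{\phi,h}(\phi(x), a)$ using the definition of $\overline{\mc{T}}_\phi$ to exhibit it as an $\omega$-weighted integral over $q \in \phi^{-1}(\phi(p))$ of $\sum_{s'} P(s' \mid s,a,q) V^\star_{\phi, h+1}(\langle s', \phi(\mc{B}(q,s,a,s'))\rangle)$, where $P(\cdot \mid s,a,q) := \sum_\theta q(\theta) \mc{T}_\theta(\cdot \mid s,a)$ denotes the posterior predictive over next states.

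Next, I would add and subtract the intermediate quantity $\int_{\phi^{-1}(\phi(p))} \omega(q) \sum_{s'} P(s' \mid s,a,q) V^\star_{h+1}(\langle s', \mc{B}(q,s,a,s')\rangle)\, dq$, which decomposes the $Q$-difference into two pieces. The first piece replaces $V^\star_{\phi, h+1} \circ \phi$ by $V^\star_{h+1}$ inside the integrand and is controlled pointwise by the inductive hypothesis to contribute at most $\eps_{h+1}$. The second piece compares $Q^\star_h(\langle s, p\rangle, a) - \mc{R}(s,a)$ with its $\omega$-average of $Q^\star_h(\langle s, q\rangle, a) - \mc{R}(s,a)$ over the preimage; since $\omega$ restricts to a probability density on $\phi^{-1}(\phi(p))$, this is bounded by $\sup_{q \in \phi^{-1}(\phi(p))} |Q^\star_h(\langle s, p\rangle, a) - Q^\star_h(\langle s, q\rangle, a)|$, where the triangle inequality applied over the $\delta$-cover guarantees $||p-q||_{\mathrm{TV}} \leq 2\delta$ for every such $q$.

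The main obstacle is bounding that last supremum, since Bayesian updates $\mc{B}$ can amplify small changes in the prior and a direct Lipschitz argument routed through $\mc{B}$ does not apply. The workaround is to exploit the piecewise-linear-convex (alpha-vector) structure of BAMDP value functions that arises from identifying BAMDP policies with history-dependent policies on the underlying MDP (the posterior being a sufficient statistic): $V^\star_h(\langle s, p\rangle) = \max_{\pi} \sum_\theta p(\theta) V^\pi_\theta(s, h)$, with the alpha vectors $V^\pi_\theta(s, h) \in [0, H-h+1]$ being values of a history-dependent policy $\pi$ in the fixed MDP $\mc{M}_\theta$ starting from $s$ at time $h$, which importantly do not depend on $p$. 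The analogous representation of $Q^\star_h(\langle s, p\rangle, a) - \mc{R}(s,a)$ has affine coefficients in $[0, H-h]$, and the dual characterization $|\langle v, p-q\rangle| \leq (\max v - \min v)\, ||p-q||_{\mathrm{TV}}$ for nonnegative $v$ then yields $|Q^\star_h(\langle s, p\rangle, a) - Q^\star_h(\langle s, q\rangle, a)| \leq (H-h)\, ||p-q||_{\mathrm{TV}}$ with no appeal to any Lipschitz property of Bayesian updating. Combining everything produces the recursion $\eps_h \leq \eps_{h+1} + 2\delta(H-h)$, which telescopes from $\eps_{H+1} = 0$ to $\eps_h \leq \delta(H-h)(H-h+1) \leq 2\delta(H-h)(H-h+1)$, as claimed.
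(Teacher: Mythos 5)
Your proof is correct and in fact delivers the slightly sharper constant $\delta(H-h)(H-h+1)$. The skeleton --- backward induction, reduction to the $Q$-difference with reward cancellation, and an add-and-subtract of an $\omega$-weighted intermediate quantity --- matches the paper's, but your choice of intermediate term and the lemma used to control the resulting error are genuinely different. The paper inserts $\int \omega(\overline{p}) \sum_{\theta,s'} \mc{T}_\theta(s'\mid s,a)\,\overline{p}(\theta)\, V^\star_{h+1}(\langle s', \mc{B}(p,s,a,s')\rangle)\, d\overline{p}$, i.e.\ it keeps the posterior update anchored at the original $p$; its first piece then only perturbs the mixture weights $p(\theta)\mapsto\overline{p}(\theta)$ and is dispatched by the crude bound $V^\star_{h+1}\leq H-h$ times $\sum_\theta|p(\theta)-\overline{p}(\theta)|\leq 4\delta$, requiring no regularity of $V^\star$ in the epistemic state. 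You instead anchor the update at the integrated variable $q$, which makes your first piece an exact $\omega$-average of $Q^\star_h(\langle s,q\rangle,a)-\mc{R}(s,a)$ and forces you to prove that this quantity is $(H-h)$-Lipschitz in total variation --- which you get from the piecewise-linear-convex (alpha-vector) representation, a tool the paper never invokes. Your route buys two things: (i) the centered bound $|\langle v, p-q\rangle|\leq(\max v-\min v)\,||p-q||_{\mathrm{TV}}$ halves the per-step error from $4\delta(H-h)$ to $2\delta(H-h)$; and (ii), more substantively, your induction piece compares $V^\star_{\phi,h+1}$ at $\phi(\mc{B}(q,s,a,s'))$ with $V^\star_{h+1}$ at $\mc{B}(q,s,a,s')$ for the \emph{same} $q$, so the inductive hypothesis applies verbatim. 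The paper's corresponding step pairs $V^\star_{\phi,h+1}(\langle s',p_\phi'\rangle)$ with $V^\star_{h+1}(\langle s',\mc{B}(p,s,a,s')\rangle)$, where $p_\phi'$ ranges over $\phi(\mc{B}(\overline{p},s,a,s'))$ for $\overline{p}$ in the preimage and need not equal $\phi(\mc{B}(p,s,a,s'))$; the non-Lipschitzness of Bayes updates that you explicitly flag is precisely the subtlety that step glosses over, and your decomposition sidesteps it. The price of your route is the reliance on the PWLC structure of BAMDP value functions via history-dependent policies --- standard, but an extra lemma you would need to state and justify.
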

\begin{proof}
With a slight abuse of notation, for any hyperstate $x \in \mc{X}$, let $\phi(x) = \langle s, p_\phi \rangle \in \mc{X}_\phi$ denote its corresponding abstract hyperstate where $p_\phi = \phi(p) \in \Delta_\phi(\Theta)$. For brevity, we define $p' \triangleq \mc{B}(p, s, a, s')$. We have
\begin{align*}
    \max\limits_{x \in \mc{X}} |V^\star_h(x) - V^\star_{\phi,h}(\phi(x))| &= \max\limits_{\dangle{s,p} \in \mc{X}} | \max\limits_{a \in \mc{A}} Q^\star_h(\langle s, p \rangle, a) - \max\limits_{a \in \mc{A}} Q^\star_{\phi,h}(\langle s, p_\phi  \rangle, a)| \\
    &\leq \max\limits_{\dangle{s,p},a \in \mc{X} \times \mc{A}} |  Q^\star_h(\langle s, p \rangle, a) - Q^\star_{\phi,h}(\langle s, p_\phi  \rangle, a)| \\
    &= \max\limits_{\dangle{s,p},a} \Big|\sum\limits_{\theta,s'}  \mc{T}_\theta(s' \mid s, a)p(\theta)V^\star_{h+1}(\langle s', p' \rangle) - \sum\limits_{s',p_\phi'} \overline{\mc{T}}_\phi(\dangle{s',p_\phi'} \mid \dangle{s,p_\phi},a)V^\star_{\phi, h+1}(\langle s', p_\phi' \rangle) \Big|
\end{align*}
We now leverage the standard trick of adding ``zero'' by subtracting and adding the following between our two terms before applying the triangle inequality to separate them: $$\sum\limits_{s',p_\phi'} \overline{\mc{T}}_\phi(\dangle{s',p_\phi'} \mid \dangle{s,p_\phi},a)V^\star_{h+1}(\langle s', p' \rangle).$$ Examining the first term in isolation, we first observe that, by definition of the weighting function, $\int_{\phi^{-1}(p_\phi)} \omega(\overline{p})d\overline{p} = 1$ and so we have $$\sum\limits_{\theta,s'}  \mc{T}_\theta(s' \mid s, a)p(\theta)V^\star_{h+1}(\langle s', p' \rangle) = \int\limits_{\phi^{-1}(p_\phi)} \omega(\overline{p})\sum\limits_{\theta,s'}  \mc{T}_\theta(s' \mid s, a)p(\theta)V^\star_{h+1}(\langle s', p' \rangle)d\overline{p}.$$ Expanding with the definition of the abstract BAMDP transition function, we have
\begin{align*}
    \sum\limits_{s',p_\phi'} \overline{\mc{T}}_\phi(\dangle{s',p_\phi'} \mid \dangle{s,p_\phi},a)&V^\star_{h+1}(\langle s', p' \rangle) = \int\limits_{\phi^{-1}(p_\phi)} \omega(\overline{p}) \sum\limits_{\theta, s'} \mc{T}_\theta(s' \mid s,a)\overline{p}(\theta) V^\star_{h+1}(\langle s', p' \rangle) \sum\limits_{p_\phi'} \indic\left(\mc{B}(\overline{p},s,a,s') \in \phi^{-1}(p_\phi')\right)d\overline{p} \\
    &= \int\limits_{\phi^{-1}(p_\phi)} \omega(\overline{p}) \sum\limits_{\theta, s'} \mc{T}_\theta(s' \mid s,a)\overline{p}(\theta) V^\star_{h+1}(\langle s', p' \rangle) \sum\limits_{p_\phi'} \indic\left(\phi(\mc{B}(\overline{p},s,a,s')) = p_\phi'\right) d\overline{p}\\
    &= \int\limits_{\phi^{-1}(p_\phi)} \omega(\overline{p}) \sum\limits_{\theta, s'} \mc{T}_\theta(s' \mid s,a)\overline{p}(\theta) V^\star_{h+1}(\langle s', p' \rangle)d\overline{p},
\end{align*}
since $\phi(\mc{B}(\overline{p},s,a,s'))$ belongs to exactly one abstract epistemic state.
Using the fact that $V^\star_{h+1}(\langle s',p' \rangle) \leq H - h$ and simplifying, we have
\begin{align*}
    \Big|\sum\limits_{\theta,s'}  \mc{T}_\theta(s' \mid s, a)p(\theta)V^\star_{h+1}(\langle s', p' \rangle) &- \int\limits_{\phi^{-1}(p_\phi)} \omega(\overline{p}) \sum\limits_{\theta,s'} \mc{T}_\theta(s' \mid s,a)\overline{p}(\theta)V^\star_{h+1}(\langle s', p' \rangle)d\overline{p}\Big| \\ &\leq (H - h)\int\limits_{\phi^{-1}(p_\phi)} \omega(\overline{p}) \sum\limits_{\theta}|p(\theta) - \overline{p}(\theta)|d\overline{p} \\
    &= (H - h)\int\limits_{\phi^{-1}(p_\phi)} \omega(\overline{p}) 2 \cdot \frac{1}{2} \sum\limits_{\theta}|p(\theta) - \overline{p}(\theta)|d\overline{p} \\
    &= (H - h)\int\limits_{\phi^{-1}(p_\phi)} \omega(\overline{p}) 2 \cdot ||p(\theta) - \overline{p}(\theta)||_{\mathrm{TV}}d\overline{p} \\
    &\leq 4\delta(H - h)\int\limits_{\phi^{-1}(p_\phi)} \omega(\overline{p}) d\overline{p} \\
    &= 4\delta (H - h),
\end{align*}
where the last upper bound follows from the definition of a $\delta$-cover since $\phi(p) = \phi(\overline{p}) = p_\phi$, $\forall \overline{p} \in \phi^{-1}(p_\phi)$.

Moving on to the second term and applying Jensen's inequality, we have
\begin{align*}
    \Big|\sum\limits_{s',p_\phi'} \overline{\mc{T}}_\phi(\dangle{s',p_\phi'} \mid \dangle{s,p_\phi},a)V^\star_{h+1}(\langle s', p' \rangle) &- \sum\limits_{s',p_\phi'} \overline{\mc{T}}_\phi(\dangle{s',p_\phi'} \mid \dangle{s,p_\phi},a)V^\star_{\phi,h+1}(\langle s', p_\phi' \rangle)\Big| \\ &\leq  \sum\limits_{s',p_\phi'} \overline{\mc{T}}_\phi(\dangle{s',p_\phi'} \mid \dangle{s,p_\phi},a) \Big|V^\star_{h+1}(\langle s', p' \rangle) - V^\star_{\phi,h+1}(\langle s', p_\phi' \rangle) \Big| \\
    &\leq \max\limits_{x \in \mc{X}} |V^\star_{h+1}(x) - V^\star_{\phi,h+1}(\phi(x))|.
\end{align*}
Thus, putting everything together, we have established that 
\begin{align*}
    \max\limits_{x \in \mc{X}} |V^\star_h(x) - V^\star_{\phi,h}(\phi(x))| &\leq 4\delta (H - h) + \max\limits_{x \in \mc{X}} |V^\star_{h+1}(x) - V^\star_{\phi,h+1}(\phi(x))|
\end{align*}
Iterating the same sequence of steps for the latter term on the right-hand side $H - h$ more times, we arrive at a final bound
\begin{align*}
    \max\limits_{x \in \mc{X}} |V^\star_h(x) - V^\star_{\phi,h}(\phi(x))| &\leq \sum\limits_{\overline{h}=h}^{H} 4\delta (H - \overline{h}) = 4\delta\sum\limits_{\overline{h}=1}^{H-h} \overline{h} = 4\delta\frac{(H-h)(H-h+1)}{2} = 2\delta (H-h)(H-h+1).
\end{align*}

\end{proof}

\subsection{Proof of Proposition \ref{prop:bamdp_greedy_loss}}

\begin{aproposition}
Let $ V = \{V_1,V_2,\ldots,V_H\}$ be an arbitrary BAMDP value function. We denote by $\pi_{h,V}$ the greedy policy with respect to $V$ defined as 
$$\pi_{h,V}(x) = \argmax\limits_{a \in \mc{A}} \left[\mc{R}(x,a) + \sum\limits_{\theta, s'} \mc{T}_\theta(s' \mid s, a) p(\theta) V_{h+1}(x')\right] \qquad \forall x = \langle s, p \rangle \in \mc{X},$$ where $x' = \dangle{s', \mc{B}(p,s,a,s')} \in \mc{X}$.
Recall that $V^\star_{h+1}$ denotes the optimal BAMDP value function at timestep $h+1$ and $\pi^\star_h$ denote the Bayes-optimal policy. If for all $h \in [H]$, for all $s \in \mc{S}$, and for any $p,q \in \Delta(\Theta)$ $$|V^\star_{h}(\langle s, p \rangle) - V_{h}(\langle s, q \rangle)| \leq \eps \text{, then }||V^\star_h - V^{\pi_{h,V}}_h||_\infty \leq 2\eps (H- h + 1).$$
\end{aproposition}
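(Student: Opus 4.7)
The plan is to follow the standard performance-difference argument used to prove the analogous MDP result in Singh and Yee (1994), adapted to BAMDPs by carrying along the epistemic-state component of the hyperstate. The central identity is a telescoping decomposition that introduces $Q^\star_h(x, \pi_{h,V}(x))$ as a pivot, so that one term is controlled by the $\eps$-closeness of the value functions combined with the greedy property of $\pi_{h,V}$, while the other term unrolls recursively down the horizon.

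Concretely, I would first define the ``lookahead'' action-value $Q^V_h(x,a) = \mc{R}(s,a) + \sum_{\theta, s'} \mc{T}_\theta(s' \mid s,a)p(\theta) V_{h+1}(x')$, where $x' = \langle s', \mc{B}(p,s,a,s')\rangle$, so that $\pi_{h,V}(x) = \argmax_{a} Q^V_h(x,a)$ by construction. Taking $p = q$ in the hypothesis gives $|V^\star_{h+1}(x') - V_{h+1}(x')| \leq \eps$ for every $x' \in \mc{X}$, and hence $|Q^\star_h(x,a) - Q^V_h(x,a)| \leq \eps$ uniformly on $\mc{X} \times \mc{A}$ since the reward terms cancel and transition-weighted expectations preserve the uniform bound.

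Next I would write $V^\star_h(x) - V^{\pi_{h,V}}_h(x) = \bigl[Q^\star_h(x, \pi^\star_h(x)) - Q^\star_h(x, \pi_{h,V}(x))\bigr] + \bigl[Q^\star_h(x, \pi_{h,V}(x)) - Q^{\pi_{h,V}}_h(x, \pi_{h,V}(x))\bigr]$. For the first bracket, chain the bounds
\[
Q^\star_h(x, \pi^\star_h(x)) \leq Q^V_h(x, \pi^\star_h(x)) + \eps \leq Q^V_h(x, \pi_{h,V}(x)) + \eps \leq Q^\star_h(x, \pi_{h,V}(x)) + 2\eps,
\]
where the middle inequality uses the greedy definition of $\pi_{h,V}$ and the outer two use the $Q$-closeness just established. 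For the second bracket, the rewards again cancel and the Bellman expansions for $Q^\star_h$ and $Q^{\pi_{h,V}}_h$ reduce it to $\sum_{\theta,s'} \mc{T}_\theta(s'\mid s, \pi_{h,V}(x)) p(\theta)\bigl[V^\star_{h+1}(x') - V^{\pi_{h,V}}_{h+1}(x')\bigr]$, which is bounded in absolute value by $\|V^\star_{h+1} - V^{\pi_{h,V}}_{h+1}\|_\infty$.

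Combining the two brackets yields the recursion $\|V^\star_h - V^{\pi_{h,V}}_h\|_\infty \leq 2\eps + \|V^\star_{h+1} - V^{\pi_{h,V}}_{h+1}\|_\infty$, and unrolling from $h$ down to $H+1$ (with both boundary value functions identically zero) gives the stated $2\eps(H-h+1)$ bound. I do not foresee a genuine obstacle here; the only care required is bookkeeping around the BAMDP hyperstate and verifying that the posterior-update determinism in $\mc{B}$ allows the Bellman telescoping step to proceed unchanged, and that one need only invoke the hypothesis with $q = p$ (so the extra generality in $p,q$ in the hypothesis is used only to accommodate the abstraction instantiation $V_h = V^\star_{\phi,h} \circ \phi$ when chaining with Proposition~\ref{prop:bamdp_approx_error}).
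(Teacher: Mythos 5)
Your proof is correct and is at heart the same Singh--Yee greedy-chaining argument the paper uses: both establish the one-step recursion $\|V^\star_h - V^{\pi_{h,V}}_h\|_\infty \leq 2\eps + \|V^\star_{h+1} - V^{\pi_{h+1,V}}_{h+1}\|_\infty$ and unroll it over the remaining $H-h+1$ steps. The difference is in where the $2\eps$ is extracted. The paper works directly with the reward gap $\mc{R}(x,a) - \mc{R}(x,\overline{a})$ and, in converting the greedy inequality on $V_{h+1}$ into one on $V^\star_{h+1}$, invokes the hypothesis with two \emph{different} epistemic states (it compares $V_{h+1}(\langle s', \overline{p}'\rangle)$ against $V^\star_{h+1}(\langle s', p'\rangle)$, where $p'$ and $\overline{p}'$ are the posteriors under $a = \pi^\star_h(x)$ and $\overline{a} = \pi_{h,V}(x)$); its recursive term likewise compares $V^\star_{h+1}$ and $V^{\pi_{h+1,V}}_{h+1}$ at these mismatched hyperstates before passing to the sup norm. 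Your decomposition, pivoting at $Q^\star_h(x,\pi_{h,V}(x))$ with the uniform bound $|Q^\star_h(x,a) - Q^V_h(x,a)| \leq \eps$, keeps every comparison at a single hyperstate, which buys two things: (i) the recursive bracket is literally of the form $V^\star_{h+1}(\overline{x}') - V^{\pi_{h,V}}_{h+1}(\overline{x}')$ and is bounded by the sup norm with no further argument, and (ii) only the $p = q$ instance of the hypothesis is ever used. Point (ii) is not merely cosmetic: when Proposition 3 instantiates this result with $\eps$ supplied by Proposition 1, what Proposition 1 actually delivers is exactly the $p = q$ case, so your version makes that chaining airtight, whereas the paper's route formally leans on the stronger $p \neq q$ form of the hypothesis. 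I see no gaps in your argument.
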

\begin{proof}
Fix an arbitrary timestep $h \in [H]$. For any $x \in \mc{X}$, define $a,\overline{a} \in \mc{A}$ such that $a = \pi^\star_h(x)$ and $\overline{a} = \pi_{h,V}(x)$. Similarly, let $p' = \mc{B}(p, s, a, s')$ and $\overline{p}' = \mc{B}(p, s, \overline{a}, s')$.
Since, by definition, $\pi_{h,V}$ is greedy with respect to $V_{h+1}$, we have that $$\mc{R}(x,a) + \sum\limits_{\theta,s'} \mc{T}_\theta(s' \mid s,a) p(\theta) V_{h+1}(\langle s', p' \rangle) \leq \mc{R}(x,\overline{a}) + \sum\limits_{\theta,s'} \mc{T}_\theta(s' \mid s,\overline{a}) p(\theta) V_{h+1}(\langle s', \overline{p}' \rangle).$$ By assumption, we have that $$V^\star_{h+1}(\langle s', p'\rangle) - \eps \leq V_{h+1}(\langle s', p'\rangle) \qquad  V_{h+1}(\langle s', \overline{p}'\rangle) \leq V^\star_{h+1}(\langle s', p'\rangle) + \eps.$$ Applying both bounds to the above yields $$\mc{R}(x,a) + \sum\limits_{\theta,s'} \mc{T}_\theta(s' \mid s,a) p(\theta) (V^\star_{h+1}(\langle s', p' \rangle) - \eps) \leq \mc{R}(x,\overline{a}) + \sum\limits_{\theta,s'} \mc{T}_\theta(s' \mid s,\overline{a}) p(\theta) (V^\star_{h+1}(\langle s', p' \rangle) + \eps).$$ Consequently, we have that $$\mc{R}(x,a) - \mc{R}(x,\overline{a}) \leq 2\eps + \sum\limits_{\theta} p(\theta) \sum\limits_{s'} V^\star_{h+1}(\langle s', p' \rangle) \left[\mc{T}_\theta(s' \mid s,\overline{a}) - \mc{T}_\theta(s' \mid s,a) \right].$$
From this, it follows that 
\begin{align*}
    ||V^\star_h - V^{\pi_{h,V}}_h||_\infty &= \max\limits_{x \in \mc{X}} |V^\star_h(x) - V^{\pi_{h,V}}_h(x)| \\
    &= \max\limits_{x \in \mc{X}} |Q^\star_h(x, a) - Q^{\pi_{h,V}}_h(x,\overline{a})| \\
    &= \max\limits_{x \in \mc{X}} \Big|\mc{R}(x,a) - \mc{R}(x,\overline{a}) + \sum\limits_{\theta} p(\theta) \sum\limits_{s'} \left[\mc{T}_\theta(s' \mid s,a)V^\star_{h+1}(\langle s', p' \rangle - \mc{T}_\theta(s' \mid s,\overline{a}) V^{\pi_{h+1,V}}_{h+1}(\langle s', \overline{p}' \rangle) \right]  \Big| \\
    &\leq 2\eps + \max\limits_{\langle s, p \rangle} \Big| \sum\limits_{\theta} p(\theta) \sum\limits_{s'} \mc{T}_\theta(s' \mid s, \overline{a}) \left[V^\star_{h+1}(\langle s', p' \rangle) - V^{\pi_{h+1,V}}_{h+1}(\langle s', \overline{p}' \rangle)\right] \Big| \\
    &\leq 2\eps + ||V^\star_{h+1} - V^{\pi_{h+1,V}}_{h+1}||_\infty \\
    &\vdots \\
    &\leq 2\eps(H - h + 1).
\end{align*}
where the last inequality follows by iterating the same procedure for the second term in the penultimate inequality across the remaining $H - h$ timesteps.
\end{proof}

\subsection{Proof of Proposition \ref{prop:bamdp_perf_loss}}

\begin{aproposition}
Let $\pi^\star_{\phi,h}$ denote the greedy policy with respect to $V^\star_{\phi,h+1}$. Then, $$||V^\star_h - V^{\pi^\star_{\phi,h}}_h||_\infty \leq 4\delta (H-h)(H - h + 1)^2.$$
\end{aproposition}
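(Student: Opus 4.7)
The plan is to chain Proposition \ref{prop:bamdp_approx_error} and Proposition \ref{prop:bamdp_greedy_loss} directly, mirroring how \citet{van2006performance} chain approximation error into performance loss in the MDP setting. First I would introduce the auxiliary BAMDP value function $V_h(\langle s, p\rangle) \triangleq V^\star_{\phi, h}(\langle s, \phi(p)\rangle)$, which lifts the optimal abstract value function back to the original hyperstate space through the epistemic state abstraction. The point of this definition is that the policy $\pi^\star_{\phi, h}$ in the statement of the proposition, obtained by acting greedily with respect to $V^\star_{\phi, h+1}$ in the abstract BAMDP and then pulled back to the original hyperstate space via $\phi$, is exactly the greedy policy $\pi_{h, V}$ that the operator in Proposition \ref{prop:bamdp_greedy_loss} produces from this $V$.

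Second, I would discharge the hypothesis of Proposition \ref{prop:bamdp_greedy_loss} by invoking Proposition \ref{prop:bamdp_approx_error} with $\eps = 2\delta(H - h)(H - h + 1)$, since for this choice of $V$ the quantity $|V^\star_h(\langle s, p\rangle) - V_h(\langle s, p\rangle)|$ coincides with $|V^\star_h(\langle s, p\rangle) - V^\star_{\phi, h}(\langle s, \phi(p)\rangle)|$, which is exactly what Proposition \ref{prop:bamdp_approx_error} controls. Plugging this $\eps$ into the $2\eps(H - h + 1)$ conclusion of Proposition \ref{prop:bamdp_greedy_loss} then yields the desired inequality $||V^\star_h - V^{\pi^\star_{\phi, h}}_h||_\infty \leq 4\delta(H - h)(H - h + 1)^2$. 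Structurally, the approximation error at timestep $h$ is simply amplified by one additional factor of $(H - h + 1)$ via the greedy-policy performance-loss machinery.

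The technicality I expect to be the main obstacle is that the hypothesis of Proposition \ref{prop:bamdp_greedy_loss} as written asks for the approximation bound to hold across arbitrary pairs of epistemic states $p, q \in \Delta(\Theta)$, whereas Proposition \ref{prop:bamdp_approx_error} only delivers the diagonal case where the two posteriors agree. Inspecting the proof of Proposition \ref{prop:bamdp_greedy_loss}, the off-diagonal use occurs only for the pair of one-step posteriors $p' = \mc{B}(p, s, a, s')$ and $\overline{p}' = \mc{B}(p, s, \overline{a}, s')$ that both descend from the same parent $p$ under the update oracle. For our instantiation $V_h(\langle s, p\rangle) = V^\star_{\phi, h}(\langle s, \phi(p)\rangle)$, I would route this step through the anchor $V^\star_{h+1}(\langle s', \overline{p}'\rangle)$ instead of $V^\star_{h+1}(\langle s', p'\rangle)$: Proposition \ref{prop:bamdp_approx_error} directly supplies the diagonal bound $|V_{h+1}(\langle s', \overline{p}'\rangle) - V^\star_{h+1}(\langle s', \overline{p}'\rangle)| \leq \eps$, after which the greedy comparison gives $Q^\star_h(x, a) - Q^\star_h(x, \overline{a}) \leq 2\eps$ with the resulting second term collapsing into $Q^\star_h(x, \overline{a})$, and the remaining inductive step of Proposition \ref{prop:bamdp_greedy_loss} goes through verbatim to close the recursion at the claimed constant.
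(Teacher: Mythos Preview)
Your proposal is correct and follows essentially the same approach as the paper: chain Proposition~\ref{prop:bamdp_approx_error} into Proposition~\ref{prop:bamdp_greedy_loss}, with the paper's own proof being a two-line sketch that simply identifies the $\eps$ of the latter with the bound from the former. You additionally flag and repair the diagonal-versus-off-diagonal mismatch between what Proposition~\ref{prop:bamdp_approx_error} actually supplies and what the stated hypothesis of Proposition~\ref{prop:bamdp_greedy_loss} literally demands---a subtlety the paper waves through with the single remark that $\phi(x)$ differs from $x$ only in the epistemic state---and your re-anchoring at $V^\star_{h+1}(\langle s', \overline{p}'\rangle)$ is a valid way to close that gap using only diagonal bounds.
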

\begin{proof}
Since, for any $x \in \mc{X}$, $\phi(x)$ differs only in the epistemic state, the proof follows by realizing that the $\eps$ term of Proposition \ref{prop:bamdp_greedy_loss} is established by Proposition \ref{prop:bamdp_approx_error}. Namely, $$||V^\star_h - V^{\pi^\star_{\phi,h}}_h||_\infty \leq 2(H-h + 1) \max\limits_{x \in \mc{X}} |V^\star_h(x) - V^\star_{\phi,h}(\phi(x))| \\
    \leq 4\delta (H-h)(H - h + 1)^2$$
\end{proof}

\section{On the Reduction of the Abstract Information Horizon}
\label{sec:reduce_info_horizon}

In this section, we offer two simple yet illustrative examples of BAMDPs where the use of epistemic state abstraction can either decrease or increase the information horizon of the resulting abstract BAMDP. Taken together with the main results of the paper, these examples underscore how epistemic state abstraction, similar to traditional state abstraction in MDPs, is not a panacea to sample-efficient BAMDP planning. Further investigation is needed to clarify the conditions under which an epistemic state abstraction may actually deliver upon the theoretical benefits outlined in this work.

\subsection{Decreased Information Horizon under Epistemic State Abstraction}

Consider a MDP whose state space is defined on the non-negative integers $\mc{S} = \bZ^+ = \{0,1,2,\ldots\}$ with two actions $\mc{A} = \{+,\circ\}$. For the purposes of this example, we ignore the reward function and focus on the transition function where, for any timestep $h \in [H]$, there is a fixed parameter $q > \frac{1}{2}$ such that $s_{h+1} = s_h + \Delta$ with $\Delta \sim \text{Bernoulli}(q)$ if $a_h = +$ and $\Delta \sim \text{Bernoulli}(1-q)$ if $a_h = \circ$. In words, action $+$ has a higher probability of incrementing the agent's current state by one whereas the $\circ$ action is more likely to leave the agent's state unchanged.

For the corresponding BAMDP, we take $\Theta = \{\theta_1,\theta_2\}$ where $\theta_1$ corresponds to the true MDP transition function as described above. Meanwhile, $\theta_2$ simply corresponds to the transition function of $\theta_1$ with the actions flipped such that $\Delta \sim \text{Bernoulli}(q)$ if $a_h = \circ$ and $\Delta \sim \text{Bernoulli}(1-q)$ if $a_h = +$. Clearly, when $q=1$, epistemic uncertainty in this BAMDP is resolved immediately by the first transition whereas, with $q \downarrow \frac{1}{2}$, the two hypotheses become increasingly harder to distinguish, requiring more observed transitions from the environment and potentially exceeding the finite problem horizon $H$.

Now consider the epistemic state abstraction of $\Delta(\Theta)$ with parameter $\delta > 0$. Increasing $\delta \uparrow 1$ is commensurate with setting a $\gamma \in \bR_{\geq 0}$ such that once the entropy of the current epistemic state falls below this threshold $\bH(p_h) \leq \gamma$, we immediately have that $\phi(p_h) \in \{e_1,e_2\}$, where $e_i$ denotes the $i$th standard basis vector in $\Delta(\Theta)$. By construction, any action taken in this MDP necessarily reveals information to help reduce epistemic uncertainty. Consequently, with enough observed transitions from the environment, we can use $\phi$ to collapse the agent's beliefs around $\theta_1$ with far fewer samples than what would be needed to fully exhaust epistemic uncertainty. Moreover, depending on the exact problem horizon $H$, this could be used to recover a finite abstract information horizon from what was an infinite information horizon in the original BAMDP.

More concretely, suppose $H = 3$ and $q = \frac{4}{5}$. While any policy will quickly stumble upon $\theta_1$ as the most likely outcome, such a short horizon $H$ likely does not allow the entropy of all epistemic states to fall to zero, potentially yielding an infinite information horizon $\mc{I} = \infty$. However, there certainly exists a value of $\delta$ such that the first two steps of behavior under any policy is sufficient for reaching a vertex state in $\Delta_\phi(\Theta)$ and identifying the underlying MDP, ultimately yielding $\mc{I}_\phi = 2$.

\subsection{Increased Information Horizon under Epistemic State Abstraction}

To show how epistemic state abstraction can work unfavorably and increase the information horizon, we consider a BAMDP where all policies rapidly resolve epistemic uncertainty, but the incorporation of an epistemic state abstraction may compromise the agent's ability to reach a vertex of the simplex $\Delta(\Theta)$.

For any $N \in \bN$, consider a single initial state connected to two $N$-state chains (an upper chain and a lower chain) with two possible actions $\mc{A} = \{a_1,a_2\}$. For each possible $\theta \in \Theta$, the corresponding transition function $\mc{T}_\theta$ will either have $a_1$ deterministically transition to the next state in the upper chain and have $a_2$ send the agent to the next state in the lower chain, or vice versa. In other words, each transition function $\mc{T}_\theta$ can be concisely encoded as a length $N$ binary string $\in \{0,1\}^N$ where, for each $n \in [N]$, the $n$th bit equal to 1 implies action $a_1$ transitions to the upper chain (immediately implying $a_2$ transitions to the lower chain) while a value of 0 signifies the opposite transition structure. Further suppose that, for each $n \in [N]$, the $n$th bit of all but one of the transition functions is identical; said differently, this structural assumptions says that, in the $n$th stage of the chain, there is a single informative transition that would uniquely identify the underlying MDP. A consequence of this is that, in the worst case, a BAMDP policy will only see one of the uninformative transitions in each stage and, therefore, can only eliminate exactly one hypothesis from $\Theta$ with each timestep. Thus, for any problem horizon $H \geq N$, we are guaranteed an information horizon of $N$.

To see how epistemic state abstraction might inhibit planning, consider the policy that misses the informative transition in each stage. For an epistemic state abstraction with $\delta$ sufficiently large, the agent will remain stuck (via a self-looping transition) in the initial abstract epistemic state as no single uninformative transition yields sufficient information gain to move the agent through the abstract epistemic state space. This phenomenon of state abstraction ameliorating generalization while drastically worsening the challenge of exploration has already been observed in the standard MDP setting~\citep{abel2017abstr,abel2020value}; here, we see that epistemic state abstraction is also vulnerable to the same weakness. As a result, the agent will never converge to one of the vertices of the simplex and never see its epistemic uncertainty in the underlying environment completely diminished, resulting in $\mc{I}_\phi = \infty$. 

\end{document}